\documentclass{article}
\UseRawInputEncoding

\PassOptionsToPackage{numbers, sort&compress}{natbib}
\usepackage[final, main]{neurips_2026} % Use [final] for camera-ready
\usepackage[utf8]{inputenc} % allow utf-8 input
\usepackage[T1]{fontenc}    % use 8-bit T1 fonts
\usepackage{hyperref}       % hyperlinks
\usepackage{url}            % simple URL typesetting
\usepackage{booktabs}       % professional-quality tables
\usepackage{amsfonts}       % blackboard math symbols
\usepackage{nicefrac}       % compact symbols for 1/2, etc.
\usepackage{microtype}      % microtypography
\usepackage{xcolor}         % colors

\usepackage{amsmath, amssymb, amsthm}

\usepackage{graphicx}
\usepackage{enumitem}
\usepackage{newunicodechar}
\usepackage{subcaption}
\usepackage{algorithm}
\usepackage{algpseudocode}
\usepackage{booktabs}
\usepackage{listings}
\usepackage{amsthm}
\newtheorem{lemma}{Lemma}[section]
\newtheorem{proposition}{Proposition}[section]
\usepackage{makecell}

\theoremstyle{definition} 
\newunicodechar{ℝ}{\ensuremath{\mathbb{R}}}
\newunicodechar{ℕ}{\ensuremath{\mathbb{N}}}
\newunicodechar{ℤ}{\ensuremath{\mathbb{Z}}}
\newunicodechar{ε}{\ensuremath{\varepsilon}}
\newunicodechar{∀}{\ensuremath{\forall}}
\newunicodechar{∃}{\ensuremath{\exists}}
\newunicodechar{∈}{\ensuremath{\in}}
\newunicodechar{∉}{\ensuremath{\notin}}
\newunicodechar{∧}{\ensuremath{\wedge}}
\newunicodechar{∨}{\ensuremath{\vee}}
\newunicodechar{¬}{\ensuremath{\neg}}
\newunicodechar{→}{\ensuremath{\to}}
\newunicodechar{⇒}{\ensuremath{\Rightarrow}}
\newunicodechar{↦}{\ensuremath{\mapsto}}
\newunicodechar{≤}{\ensuremath{\leq}}
\newunicodechar{≥}{\ensuremath{\geq}}
\newunicodechar{≠}{\ensuremath{\neq}}
\newunicodechar{≈}{\ensuremath{\approx}}
\newunicodechar{≡}{\ensuremath{\equiv}}
\newunicodechar{⊆}{\ensuremath{\subseteq}}
\newunicodechar{ᵀ}{\ensuremath{^{\top}}}
\newunicodechar{ᵥ}{\ensuremath{{}_{v}}}
\newunicodechar{•}{\ensuremath{\bullet}}
\newunicodechar{·}{\ensuremath{\cdot}}
\newunicodechar{⊂}{\ensuremath{\subset}}
\newunicodechar{⊃}{\ensuremath{\supset}}
\newunicodechar{∪}{\ensuremath{\cup}}
\newunicodechar{∩}{\ensuremath{\cap}}
\newunicodechar{⊙}{\ensuremath{\odot}}
\newunicodechar{⊗}{\ensuremath{\otimes}}
\newunicodechar{⊕}{\ensuremath{\oplus}}
\newunicodechar{×}{\ensuremath{\times}}
\newunicodechar{÷}{\ensuremath{\div}}
\newunicodechar{√}{\ensuremath{\sqrt{\phantom{x}}}}
\newunicodechar{∞}{\ensuremath{\infty}}
\newunicodechar{†}{\ensuremath{^\dagger}}
\newunicodechar{δ}{\ensuremath{\delta}}

\newcommand{\cvector}{\mathbf{c}_t}
\newcommand{\marksystem}{MaRK}

\title{\marksystem{}: Markov-adapted Recurrent Kernels for Dynamic Operator Conditioning in State Space Models}

\author{%
  \text{Syed Ibrahim Omer\thanks{Corresponding Author: ibrahsyed2-c@my.cityu.edu.hk}}\\  Department of Data Science \\
  City University of Hong Kong \\
  \And
  Ginny Y. Wong \\
  NVIDIA AI Technology Center \\
  NVIDIA, Santa Clara, USA \\
  \And
  Xiangyu Zhao \\
  Department of Data Science \\
  City University of Hong Kong \\
}

\date{}

\begin{document}

\maketitle

\begin{abstract}
State Space Models (SSMs) offer an efficient alternative to Transformers for sequence modeling, yet conditioning pre-trained SSMs for iterative generation typically operates outside the recurrent operator, through input injection or activation modulation. While such mechanisms expose the model to conditioning information, they leave the underlying temporal dynamics fixed. We introduce \marksystem{} (Markov-adapted Recurrent Kernels), a dynamic operator-conditioning framework that maps context vectors directly into bounded modulations of a frozen SSM's recurrence ($A$), read-in ($B$), read-out ($C$), skip ($D$), and discretization ($\Delta$) parameters. Viewed through the lens of Linear Parameter-Varying systems, \marksystem{} induces a context-indexed family of Markov parameter sequences, allowing each diffusion timestep to reshape the model's input-output memory kernel. We instantiate \marksystem{} on a frozen 111M-parameter Hydra SSM backbone and study three adapter geometries: Hypernet, Chebyshev polynomial, and Discrete Cosine Transform kernels. Since these adapters modify the Markov parameter sequence through low-rank auxiliary maps on the frozen backbone, parameter-efficient fine-tuning arises as a structural consequence of the adaptation mechanism itself, requiring only 6.3--11M trainable auxiliary parameters to transition from a bidirectional objective to an iterative diffusion regime. The bounded recurrence parameterization further yields an analytic Affine Quadratic Stability certificate for the modulated recurrence. Through synthetic LPV recovery experiments and Markov-operator diagnostics, we show that \marksystem{} recovers coordinate-invariant temporal operators under matched assumptions and produces distinct, stable timestep-conditioned memory profiles. Empirically, the Chebyshev variant yields the strongest performance, achieving an average validation loss of 2.55, followed by the DCT (2.59) and Hypernet (3.77) geometries. Together, these results provide initial evidence that dynamic operator modulation is a principled operator-level conditioning mechanism for adapting SSMs beyond input-stream injection and adaptive normalization.
\end{abstract}

\section{Introduction} \label{sec:introduction}

State Space Models (SSMs) have emerged as highly efficient, continuous-time alternatives to Transformers for large-scale sequence modeling \cite{gu2024mambalineartimesequencemodeling, dao2024transformersssmsgeneralizedmodels, lahoti2026mamba3improvedsequencemodeling}. Conditional sequence generation (e.g., discrete diffusion) typically relies on post-hoc interventions \cite{ho2020denoisingdiffusionprobabilisticmodels, austin2021d3pm, xiang2022diffusion, nie2025largelanguagediffusionmodels, ye2025dream7bdiffusionlarge} via two external paradigms: \textbf{Input-Stream Injection} \cite{ho2020denoisingdiffusionprobabilisticmodels}, which concatenates context vectors into the input but forces state-space dynamics to carry context across the entire sequence, and \textbf{Adaptive Layer Normalization (AdaLN)} \cite{peebles2023scalablediffusionmodelstransformers, nathan2024diffusionmodelswithoutattention, phung2024dimsum}. While AdaLN dynamically modulates layer activations, it operates externally to the core SSM operator without parameterizing the underlying temporal dynamics ($A, B, C, \Delta$). Consequently, neither paradigm directly parameterizes the Markov parameter sequence ($\mathcal{H}(c_t)$), which fully characterizes an SSM's input-output temporal behavior.

To bridge this gap, we introduce \textbf{\marksystem{} (Markov-adapted Recurrent Kernels)}, an operator-level conditioning approach centered on \textbf{Dynamic Operator Modulation}. Instead of modulating external activations or inputs, \marksystem{} projects context vectors into bounded, multi-dimensional shifts that directly alter the internal dynamics ($A, B, C, D, \Delta$) of a frozen base model \cite{gu2022efficientlymodelinglongsequences}. Instantiated on Hydra BERT (a bi-directional Mamba extension parameterized as a quasi-separable matrix mixer \cite{hwang2024hydrabidirectionalstatespace}), \marksystem{} makes pre-trained matrices depend on diffusion timesteps, enabling dynamic, timestep-specific temporal properties.

By using bounded continuous modulations via low-rank geometries, \marksystem{} provides a dynamic, parameter-efficient alternative to static low-rank adaptation methods \cite{hu2022lora} in Hydra. We present three kernel geometries: Hypernet, Chebyshev Polynomial, and Discrete Cosine Transform (DCT) \cite{ahmed1974discrete}, mapping conditioning contexts to scalar shifts and low-rank scales. Bounding parameter modulations via hyperbolic tangent saturation and softplus constraints yields Affine Quadratic Stability (AQS) certificates for the modulated recurrence \cite{cox2018lpv}. Applied to a $23$-layer Hydra SSM, our adapters condition the frozen 111M base model using 6.3--11M trainable parameters, converting expensive pre-training into targeted fine-tuning \cite{hwang2024hydrabidirectionalstatespace}.

We evaluate \marksystem{} within an iterative mask-based diffusion framework using stability certificates, synthetic operator recovery, Markov diagnostics, and ablations. Training relies on a Context-Adaptive Token-Level Noise Rescheduling (CART) objective to provide token-level, context-aware signals during refinement \cite{ye2025dream7bdiffusionlarge}.

\paragraph{Contributions.}
\begin{enumerate}[label=(\roman*)]
    \item \textbf{LPV-SSM Operator Conditioning:} We frame conditional generation as LPV control over SSM operators, directly modulating $A$, $B$, $C$, $D$, and $\Delta$ beyond input injection and AdaLN.
    \item \textbf{\marksystem{} Kernel Family:}  We design three structural adapter geometries (Hypernet, Chebyshev, and DCT) that map conditioning vectors to low-rank modulations.
    \item \textbf{Stability Certificates and Operator Recovery:} We use bounded parameterization to obtain analytic AQS certificates and study Markov-operator recovery under matched synthetic assumptions.
    \item \textbf{Markov Operator Adaptation:} We show MaRK conditions the Markov parameter sequence, whose timestep-specific lag-norm profiles reveal distinct temporal signatures.
\end{enumerate}

\section{Linear Parameter Varying (LPV) Foundation}
\label{sec:lpv_foundation}

We adapt the base time-invariant State Space Model using the theory of Linear Parameter Varying (LPV) models \cite{cox2018lpv, shamma1988analysis}. Therefore, we view each modulated SSM layer as a continuous-time state-space system:
\begin{align}
\dot{x}(\tau) &= A(\cvector)\, x(\tau) + B(\cvector)\, u(\tau) \\
y(\tau) &= C(\cvector)\, x(\tau) + D(\cvector)\, u(\tau)
\end{align}
\noindent where $x(\tau) \in \mathbb{R}^n$ is the state, $u(\tau) \in \mathbb{R}^m$ the input, $y(\tau) \in \mathbb{R}^m$ the output, $n$ is the state dimension, $m$ the embedding dimension, $\cvector$ is a dynamic conditioning vector tied to the diffusion timestep $t$ obtained using standard DDPM/DiT sinusoidal (Fourier) encoding and two-layer MLP \cite{ho2020denoisingdiffusionprobabilisticmodels, peebles2023scalablediffusionmodelstransformers}, $\dot{x}(\tau)$ is the derivative of the state $x(\tau)$ with respect to continuous internal-time $\tau$ (i.e. $\frac{dx}{d\tau}$).

\marksystem{} modulates the continuous-time parameters $Q\in\{A, B, C, D, \Delta\}$ using learned mapping functions $\psi_\text{scale}(\cdot)$ and $\psi_\text{shift}(\cdot)$. We use the primed notation to denote the modulated parameters $Q'\in\{A', B',C', D', \Delta'\}$. 

For diagonalized scalar parameters $\Theta\in\{A_\text{log}, \Delta_\text{bias}, D\}$ we apply bounded shifts to maintain stability, where $\beta_{\Theta}$ denotes the scalar magnitude of the bounded shifts:
\begin{align}
    \Theta' &= \Theta + \beta_{\Theta} \tanh\big(\psi_\text{shift}(\cvector)\big) \label{eq:shift}
\end{align}
For dense projection matrices $\Phi\in\{B,C\}$ we apply bounded scales and shifts, where $\alpha_{\Phi}$, $\beta_{\Phi}$ denote the scalar magnitude of the bounded scales and shifts respectively. To avoid the curse of dimensionality for large matrices $B, C \in \mathbb{R}^{d}$ 
(where $d = n_{\text{groups}} \times d_{\text{state}}$; $n_{\text{groups}}$ is the number of SSM groups and $d_{\text{state}}$ is the state dimension), modulation is applied via a rank-$r$ factorized projection to these matrices:
\begin{align}
    \Phi' &= \Phi \odot \big[1 +\alpha_{\Phi} \tanh\big(\psi_\text{scale}(\cvector)\big)\big] + \beta_{\Phi}\tanh\big(\psi_\text{shift}(\cvector)\big) \label{eq:scale}
\end{align}

\textbf{Remark (Operator vs. Activation Control).} While \S\ref{sec:introduction} contrasts these paradigms conceptually, the structural distinction is formalized in Eq. (\ref{eq:shift})--(\ref{eq:scale}). By modulating the infinitesimal generator $A$ and discretization $\Delta$ rather than applying scalar affine transforms (AdaLN), MaRK directly parameterizes the system's eigenvalue distribution and frequency response. This makes the context $\cvector$ explicit in the temporal kernel's decay properties, rather than treating it only as an external signal bias.

Finally, we obtain the LPV-SSM form from equations (1) and (2):
\begin{align}
    \dot{x}(\tau) &= \bar{A'}(\cvector)\, x(\tau) + \bar{B'}(\cvector)\, u(\tau) \\
    y(\tau)   &= C'(\cvector)\, x(\tau) + D'(\cvector)\, u(\tau)
\end{align}
\noindent where $\bar{A'}$ and $\bar{B'}$ are the discretized versions of the continuous-time parameters $A'$ and $B'$, obtained via the Zero-Order Hold (ZOH) \cite{gu2022efficientlymodelinglongsequences}.

The complete input-output behavior of this LPV-SSM is fully characterized by the Markov parameter, where $\mathcal{H}(\cvector)$ is the Markov Parameter Sequence, $h_k(\tau)$ is the $k^{th}$ Markov Parameter, and $k$ is the shift index \cite{kailath1980linear}:
\begin{align}
    \mathcal{H}(\cvector) = \left\{ h_k(\tau) = C'(\cvector)\,\bar{A'^{\,k}}(\cvector)\,\bar{B'}(\cvector) \right\}_{k=0}^{\infty}
\end{align}
This object is the natural invariant of the system: two SSMs with different internal coordinates but identical Markov sequences are behaviorally indistinguishable. Crucially, by making each matrix in this product condition-dependent, MaRK adapts the full temporal operator at each generation step rather than acting externally on activations or inputs. We term this approach Markov-adapted Recurrent Kernels (\marksystem{}).

\subsection{Constructive Stability via Log-Space Parameterization.}\label{subsec:modulation}

The bounded modulations above support a constructive stability argument through the ZOH (Zero-Order Hold) discretization chain. 
In Hydra's diagonal SSM, each recurrence eigenvalue $\bar{A}_i$ is produced by the composition:
\begin{align}
    A &= -\exp(A_{\log}') < 0 \label{eq:neg_exp} \\
    \Delta &= \text{softplus}(\Delta_{\text{input}} + \Delta_{\text{bias}}') > 0 \label{eq:dt_softplus} \\
    \bar{A}_i &= \exp(A_i \cdot \Delta_i) \in (0, 1) \label{eq:zoh_unit}
\end{align}
Equation~\eqref{eq:neg_exp} follows from $\exp(x)>0$ for all $x\in\mathbb{R}$, hence $-\exp(A_{\log}')<0$. Equation~\eqref{eq:dt_softplus} holds because $\mathrm{softplus}(x)=\log(1+e^x)>0$ for all $x\in\mathbb{R}$. Finally, Equation~\eqref{eq:zoh_unit} follows since $A_i<0$ and $\Delta_i>0$ imply $A_i \cdot \Delta_i<0$, and therefore $\bar{A}_i=\exp(A_i \cdot \Delta_i)\in(0,1)$.

\begin{proposition}[Constructive AQS stability]\label{prop:constructive_aqs}
\textit{For the MaRK-modulated diagonal SSM, the identity matrix $P = I$ is a valid common Lyapunov function satisfying the AQS condition $\bar{A}(\cvector)^\top P\, \bar{A}(\cvector) - P \preceq -\varepsilon I$ for all admissible conditioning contexts $\cvector$, with stability margin $\varepsilon = 1 - \max_i \bar{A}_i^2 > 0$.}
\end{proposition}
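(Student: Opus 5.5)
Since $\bar{A}(\cvector)$ is diagonal, I would reduce the matrix inequality to scalar inequalities on the eigenvalues and then use the chain \eqref{eq:neg_exp}--\eqref{eq:zoh_unit} to bound them. First, with $P=I$ and $\bar{A}(\cvector)=\mathrm{diag}(\bar{A}_1,\dots,\bar{A}_n)$ real diagonal, we get $\bar{A}^\top P\bar{A}-P=\mathrm{diag}(\bar{A}_i^2-1)$. A diagonal matrix satisfies $\preceq -\varepsilon I$ exactly when every diagonal entry is at most $-\varepsilon$. So the AQS condition is equivalent to $1-\bar{A}_i^2\ge\varepsilon$ for all $i$, which holds with equality at the worst index when $\varepsilon=1-\max_i\bar{A}_i^2$.

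Second, I would show $\varepsilon>0$. By \eqref{eq:zoh_unit}, each $\bar{A}_i=\exp(A_i\Delta_i)\in(0,1)$. This holds for every real value of the modulated $A_{\log}'$ and $\Delta_{\text{bias}}'$, so it holds for whatever shifts \eqref{eq:shift} produces. Over finitely many indices the maximum is attained, so $\max_i\bar{A}_i^2<1$.

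The main obstacle is the word ``common'' together with ``for all admissible $\cvector$''. A common Lyapunov function needs a margin that does not depend on the context, and pointwise $\bar{A}_i<1$ alone does not rule out $\sup_{\cvector}\bar{A}_i(\cvector)\to1$. I would use the boundedness of the modulation to get a uniform bound on each factor of $A_i\Delta_i$:
\begin{itemize}
\item Since $|\beta_{A}\tanh(\cdot)|\le\beta_{A}$ in \eqref{eq:shift}, we have $A_{\log}'\ge A_{\log}-\beta_{A}$, so $|A_i|\ge \exp(A_{\log,i}-\beta_{A})>0$ uniformly in $\cvector$.
\item For $\Delta$, the bias is bounded, $\Delta_{\text{bias}}'\ge\Delta_{\text{bias}}-\beta_{\Delta}$. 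Softplus is monotone, so $\Delta_i$ is bounded below as long as $\Delta_{\text{input}}$ ranges over a set bounded below, e.g.\ compact admissible inputs. I would state this as the meaning of ``admissible''.
\end{itemize}
The product then gives $A_i\Delta_i\le -\underline{a}\,\underline{\delta}<0$ uniformly, hence $\bar{A}_i\le e^{-\underline{a}\underline{\delta}}<1$. This yields the uniform margin $\varepsilon\ge 1-e^{-2\underline{a}\underline{\delta}}>0$.

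If that uniformity assumption is unavailable, I would fall back to reading the statement pointwise, with $\varepsilon$ depending on $\cvector$, where the first two steps suffice. Since the statement writes $\varepsilon$ in terms of $\bar{A}_i$ for a given context, I would present the pointwise version first and then note the uniform refinement under bounded inputs.
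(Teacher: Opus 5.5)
Your proposal is correct and follows essentially the paper's argument: the diagonal reduction to $\bar{A}_i^2-1<0$, the ZOH chain giving $\bar{A}_i\in(0,1)$, and the worst case at the vertex $A_{\log}-\beta_A$ obtained by monotonicity. Your explicit lower bound on $\Delta$ corresponds to the paper's use of $\Delta_{\min}=0.001$ in the certificate evaluation (Section~\ref{subsec:aqs}); the proof sketch leaves that assumption implicit, and you rightly make it explicit to get a uniform margin.
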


\begin{proof}[Proof sketch]
Since $A$ is diagonal, the Lyapunov condition reduces element-wise to $\bar{A}_i^2 - 1 < 0$ for each head $i$. 
By Equations~\eqref{eq:neg_exp}--\eqref{eq:zoh_unit}, every element $\bar{A}_i \in (0,1)$ for admissible conditioning contexts. 
Because $\bar{A}_i(\delta)$ is monotonically decreasing in the shift $\delta$ (larger shift $\to$ more negative $A$ 
$\to$ more decay), the worst-case spectral radius occurs at $\delta = -\beta_A$, bounded away from $1$ by the finite 
hyper-rectangle of modulations.
\end{proof}

\section{Method}
\label{sec:method}

\subsection{Pretrained Hydra BERT SSM backbone}
\label{subsec:hydra}
We instantiate \marksystem{} on top of a pre-trained Hydra BERT backbone, a bidirectional extension of Mamba implemented through a quasiseparable matrix mixer \cite{hwang2024hydrabidirectionalstatespace}. Hydra adopts the ``matrix mixer'' view of sequence mixing, in which a sequence mixer is a structured linear map $Y = MX$ on the token sequence. In contrast to ad-hoc bidirectional SSM heuristics that combine separate forward/backward models by addition or concatenation \cite{visionmamba, maskmamba}, Hydra parameterizes a single mixer matrix whose strictly lower and upper triangles correspond to forward and backward semiseparable (SSM) structure, while the diagonal remains free and acts as a learned residual pathway. Concretely, Hydra's quasiseparable mixer admits the form (cf.\ Eq.~(3) in \cite{hwang2024hydrabidirectionalstatespace})
\begin{equation}
m_{ij}=
\begin{cases}
\overrightarrow{\mathbf{c}}_{i}^{\top}\,\overrightarrow{\mathbf{A}}_{i:j}^{\times}\,\overrightarrow{\mathbf{b}}_{j} & i>j \\
\delta_i & i=j \\
\overleftarrow{\mathbf{c}}_{j}^{\top}\,\overleftarrow{\mathbf{A}}_{j:i}^{\times}\,\overleftarrow{\mathbf{b}}_{i} & i<j
\end{cases}
\end{equation}
where $\delta_i$ is a scalar, $b_i, c_i \in \mathbb{R}^{N \times 1}$, and $A_i \in \mathbb{R}^{N \times N}$. Since this matrix features non-zero upper triangle components, it enables bidirectional processing \cite{hwang2024hydrabidirectionalstatespace}.

In our experiments, we use the released Hydra-BERT checkpoint at BERT scale (111M parameters, 23 layers) \cite{hwang2024hydrabidirectionalstatespace}. Hydra is a natural backbone for our setting because discrete diffusion language modeling is inherently bidirectional (tokens are refined using both left and right context), yet our objective requires timestep-dependent control over the internal temporal operator. \marksystem{} targets exactly the parameters that define Hydra's SSM dynamics (the continuous-time recurrence and discretization operators: $A,B,C,D,\text{ and }\Delta$ underlying the quasi-separable factors) and makes them functions of the diffusion context $\cvector$. By doing so, we demonstrate the ability of MaRK to also function as a fine-tuning method to convert an existing bidirectional objective into a context-aware diffusion objective. Throughout, we treat the Hydra backbone as frozen and learn only the \marksystem{} adapters, isolating the effect of dynamic operator modulation from changes in the base representation.

\subsection{Discrete Diffusion Language Modeling}
\label{subsec:diffusion}
We evaluate \marksystem{} in the masked discrete diffusion framework used by recent diffusion LLMs \cite{ye2025dream7bdiffusionlarge,nie2025largelanguagediffusionmodels, austin2021d3pm}. Let $\mathbf{x}_0 \in \mathcal{V}^{L}$ be a token sequence of length $L$ from the data distribution, and let $\mathcal{M}$ denote an absorbing noise token. The forward process independently corrupts tokens according to a timestep $t$ by masking each position with probability $t$ (equivalently, keeping it unmasked with probability $\alpha(t)=1-t$) \cite{nie2025largelanguagediffusionmodels}:
\begin{equation}
q(\mathbf{x}_t \mid \mathbf{x}_0) = \prod_{i=1}^{L} q(x_{t,i} \mid x_{0,i}), \qquad
q(x_{t,i}=\mathcal{M} \mid x_{0,i}) = t,\;\; q(x_{t,i}=x_{0,i}\mid x_{0,i}) = 1-t.
\label{eq:forward_mask}
\end{equation}
The reverse model is a mask predictor, $p_\theta(\mathbf{x}_0 \mid \mathbf{x}_t, t)$, that predicts all masked positions in parallel using full bidirectional context. Concretely, we parameterize $p_\theta$ with the Hydra backbone, and provide the diffusion context (timestep embedding) as the conditioning vector $\cvector$ that drives \marksystem{} operator modulation that conditions $p_\theta$ based on the diffusion context.

Following \cite{ye2025dream7bdiffusionlarge,nie2025largelanguagediffusionmodels}, we train by sampling a clean sequence $\mathbf{x}_0$, sampling $t \sim \mathcal{U}(0,1)$, then drawing  $\mathbf{x}_t \sim q(\mathbf{x}_t\mid \mathbf{x}_0)$, and minimizing the negative log-likelihood on only the masked tokens:
\begin{equation}
\mathcal{L}_{\text{diff}}(\theta)
=
\mathbb{E}_{\mathbf{x}_0,\,t,\,\mathbf{x}_t}
\Big[
w(t)\sum_{i=1}^{L} \mathbf{1}[x_{t,i}=\mathcal{M}]\; \big(-\log p_\theta(x_{0,i}\mid \mathbf{x}_t, t)\big)
\Big],
\label{eq:masked_diff_loss}
\end{equation}
where $w(t)$ is a timestep-dependent reweighting term derived from the diffusion ELBO for particular schedules \cite{nie2025largelanguagediffusionmodels}, and can be further adapted at the token level based on local context informativeness \cite{ye2025dream7bdiffusionlarge}. In our implementation, this context-adaptive reweighting is instantiated by CART (\S\ref{subsec:cart}), which emphasizes masked tokens that are surrounded by more usable (unmasked) context.

\subsection{Training Objective}
\label{subsec:cart}

To optimize training, we utilize \textbf{Context-Adaptive Token-level Noise Rescheduling (CART)}, following the token-level rescheduling paradigm introduced in Dream \cite{ye2025dream7bdiffusionlarge}. Rather than using a single global timestep-dependent weight $w(t)$ for all masked tokens at diffusion time $t$ (Eq.~\eqref{eq:masked_diff_loss}), CART applies a per-token weight $w_i(t,\mathbf{x}_t)$ that depends on how much usable (unmasked) context surrounds token $i$ in the partially-masked sequence $\mathbf{x}_t$. This matters because two masked tokens at the same $t$ can have very different predictability: a token with many nearby visible neighbors is effectively less ambiguous and should contribute a stronger learning signal, while a token whose neighborhood is mostly masked is inherently underdetermined and over-penalizing it injects gradient noise and encourages collapse toward ``easy'' tokens. CART operationalizes ``how much context'' as a distance-weighted sum of the visible tokens: each masked position receives one symmetric geometric kernel centered on every unmasked position, so nearer visible tokens contribute more and the weight grows with the amount of nearby visible context \cite{ye2025dream7bdiffusionlarge}. The resulting token-wise weights multiply the masked-token cross-entropy terms before averaging. We provide the full mathematical definition and its relationship to Eq.~\eqref{eq:masked_diff_loss} in Appendix~\ref{app:cart_details}.

\subsection{\marksystem{} Kernel Adapters}
\label{subsec:kernel_adapters}

\marksystem{} proposes the mapping $\psi(\cdot)$ that serves as the system's scheduling function, translating high-dimensional semantic context into the low-dimensional manifold of stable dynamical parameters. By varying the basis of this mapping, we can control the spectral regularity of the adaptation process. The certified setting uses the bounded modulation structure from equations~\eqref{eq:shift} and \eqref{eq:scale}, and the three adapter geometries are ablated head-to-head in \S\ref{subsec:ablations}.

\textbf{Hypernet Adapter:} The Hypernet maps the hidden state $h$ (obtained from the $\cvector$ via a mapping) directly to parameter shifts using linear projections $W$, to get the left and right low-rank matrices $U$ and $V$, functioning as the most 
parameter-efficient baseline ($\approx$ 6.3M parameters):
\begin{align}
    \Theta_{\text{shift}} &= W_{\Theta} h \quad \\
    UV_{\Phi_{\text{scale/shift}}} &= W_{\Phi_{\text{scale/shift}}} h \quad
\end{align}
The bounded scales control the gain amplification avoiding explosive states, whereas the shifts are kept optionally left unconstrained for the Hypernet kernel only, to be treated as a residual correction and act as the most expressive baseline.  

\textbf{Chebyshev Polynomial Adapter:} To enforce smoother gradients, this kernel uses orthogonal polynomial basis functions evaluated at a latent coordinate 
$z = W_z h$. The Chebyshev polynomials $T_k(z)$ (up to degree $K$) determine the continuous shifts ($\approx$ 11M parameters):
\begin{equation}
    \Theta_{\text{shift}} = \sum_{k=1}^K [W_{\Theta}(h)]_k \cdot T_k(z)
\end{equation}
For matrix elements, we utilize the basis mean $\bar{T}_k = \frac{1}{n_{\text{heads}}} \sum_i T_k(z_i)$:
\begin{equation}
    UV_{\Phi_\text{scale/shift}} = W_{\text{base}}(h) + \sum_{k=1}^K [W_{\text{basis}}(h)]_k \cdot \bar{T}_k
\end{equation}

\textbf{Discrete Cosine Transform (DCT) Kernel:} The DCT kernel captures spectral dynamics across diffusion steps, it first forms a length-$L$ signal $u$ by combining frequency-damped spectral weights from $h$  (with learnable decay exponent $\alpha$ via $1/(f+1)^{\text{softplus}(\alpha)+\epsilon}$ where $\epsilon \in (0,1)$ is a constant added for numerical stability) with a fixed orthonormal Type-II DCT cosine basis $\{\phi_f\}$, where $f$ indexes cosine frequency bins \cite{ahmed1974discrete}, then applies the same bounded structure as \eqref{eq:shift}--\eqref{eq:scale} with readouts from $u$, and for the $B,C$ base path, from $h$ as well ($\approx$ 7.1M parameters):
\begin{equation}
    u = \sum_{f} \frac{[W_{\text{spec}} h]_f}{(f+1)^{\text{softplus}(\alpha) + \epsilon}} \,\phi_f . \quad \text{where $\epsilon = 1 \times 10^{-4}$}
\end{equation}
Diagonal shifts and low-rank $U,V$ factors are
\begin{align}
    \Theta_{\text{shift}} &= W_{\Theta}u, \\
    UV_{\Phi_{\text{scale/shift}}} &= W_{\Phi_{\text{base}}}h + W_{\Phi_{\text{basis}}}u,
\end{align}

\section{Experiments and Analysis}
\label{sec:experiments}

\subsection{AQS Certificate and Diagnostics}
\label{subsec:aqs}

The Affine Quadratic Stability (AQS) certificate provides an analytic stability check for the bounded diagonal recurrence induced by 
the \marksystem{} kernels \cite{cox2018lpv}. Rather than solving a generic Semi-Definite Program (SDP) to search for a 
common Lyapunov function, we exploit the constructive parameterization chain from \S\ref{subsec:modulation} to obtain an 
analytical certificate with $P = I$.

We load the 23-layer pretrained Hydra base model and extract each layer's diagonal recurrence parameters $A_{\log}$. Since the 
maximum modulation bound $\beta_A$ is shared across the certified bounded variants, the bounded 
hyper-rectangle $[A_{\log} - \beta_A,\; A_{\log} + \beta_A]$ is kernel-independent within this setting.

For each layer $\ell$ and head $i$, we compute the worst-case discrete eigenvalue at the vertex $\delta = -\beta_A$, where 
$\bar{A}_i$ is closest to $1$ (monotonicity of $\bar{A}_i(\delta)$ in $\delta$, see Proposition~\ref{prop:constructive_aqs}), 
using the minimum discretization step $\Delta_{\min} = 0.001$. The energy function $V(x) = \|x\|^2$ (i.e., $P = I$) satisfies:
\begin{equation}
\bar{A}^\top I\, \bar{A} - I = \text{diag}(\bar{A}_i^2 - 1) \prec 0
\end{equation}
with per-layer stability margin $\varepsilon_\ell = 1 - \max_i \bar{A}_{i,\ell}^2 > 0$. All 23 layers certify with positive 
$\varepsilon_\ell$ under the stated modulation and discretization assumptions, supporting stability of the modulated Hydra recurrence across admissible conditioning 
contexts. The per-layer spectral radii and stability margins are visualized in Figure~\ref{fig:aqs_certificate}.

\begin{figure}[H]
    \centering
    \includegraphics[width=1\linewidth]{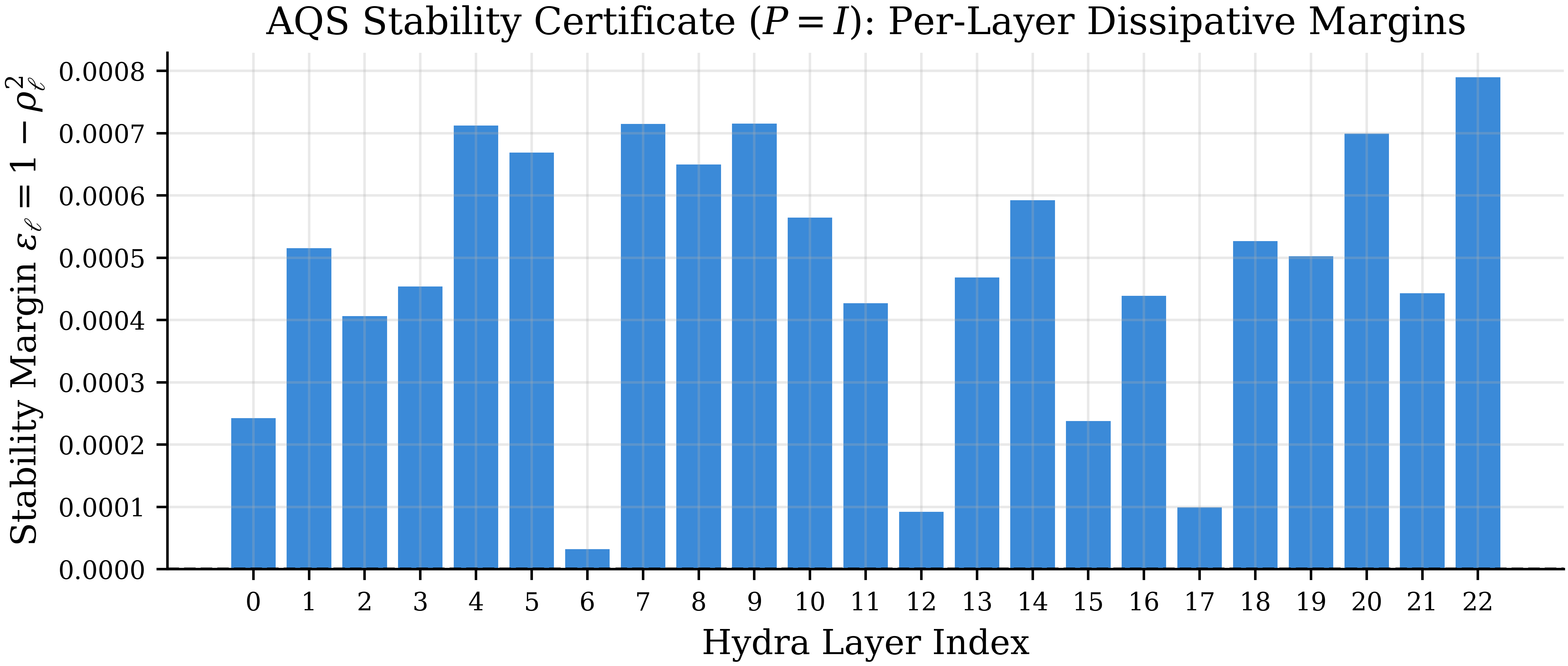}
    \caption{Stability margins ($\varepsilon_\ell$) across all the layers of the pre-trained Hydra SSM at the extreme bounds of modulation by \marksystem{}.}
    \label{fig:aqs_certificate}
\end{figure}

\subsection{Synthetic LPV Identification}
\label{subsec:identifiability}

This experiment evaluates the learnability and theoretical identifiability of the \marksystem{} architecture. While the internal state matrices inhabit an unidentifiable manifold due to non-unique factorizations, the underlying input-output dynamics are estimable under the bounded LPV assumptions below. We evaluate recovery using the coordinate-invariant Markov Operator Error (MOE): $\| \mathcal{H}_{\text{true}} - \hat{\mathcal{H}}_{\text{est}} \|_F$ (where $\mathcal{H}_{\text{true}}$ is the true Markov Parameter Sequence and $\hat{\mathcal{H}}_{\text{est}}$ is the estimated Markov Parameter Sequence by \marksystem{}). We train synthetic Hypernet, Chebyshev, and DCT adapter variants on $N \in [50,1600]$ trajectories with Gaussian output noise, keeping the backbone frozen to mirror our training regime; full synthetic protocol details are given in Appendix~\ref{app:experiment_details}.

\begin{proposition}[Operator Identifiability and Convergence]
\label{prop:identifiability}
\textit{Let $\mathcal{M}$ be an Affine Quadratically Stable LPV-SSM with strictly bounded representations, observed under standard excitation and mixing conditions. The precise statistical model, including the frozen-context identification protocol, bounded-density innovation assumptions, and operator-level persistent excitation, is specified in Appendix~\ref{app:operator_identifiability_proof}. Then (i)~the input-output operator defined by the frozen-time Markov sequence $\mathcal{H}(\cvector)=\{\bar{C'}(\cvector)\bar{A'^{\,k}}(\cvector) \bar{B'}(\cvector)\}_{k=0}^{\infty}$ is identifiable up to a structural equivalence class. (ii)~Under squared loss, the empirical estimation error converges at the optimal parametric rate $O(1/\sqrt{N})$, where $N$ is the total number of samples.}
\end{proposition}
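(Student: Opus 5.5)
The plan treats the two claims separately. Claim (i) follows from classical realization theory applied one frozen context at a time. Claim (ii) is an M-estimation rate argument for a smooth, bounded parametric model.

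\textbf{Claim (i).} Fix an admissible context $\cvector$. By Proposition~\ref{prop:constructive_aqs}, every eigenvalue $\bar{A}_i$ lies in $(0,1)$, with margin $\varepsilon>0$ uniform over the bounded hyper-rectangle of modulations. Hence the Markov parameters decay geometrically, $\|h_k(\cvector)\|\le \|C'\|\,\|\bar{B'}\|(1-\varepsilon)^{k/2}$, and the sequence $\mathcal{H}(\cvector)$ is absolutely summable. The frozen-context output is then the convolution $y_\tau=\sum_k h_k(\cvector)u_{\tau-k}+D'u_\tau+\text{noise}$.

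Under operator-level persistent excitation, the input covariance over any window is positive definite. Regressing $y$ on lagged inputs then determines each $h_k(\cvector)$, and $D'$, uniquely from second moments. This is the usual Toeplitz normal-equation argument, with an $\ell_1$ tail bound handling truncation at a finite lag. So the map $\cvector\mapsto\mathcal{H}(\cvector)$ is identifiable.

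To name the equivalence class, I would invoke Kalman--Ho realization theory. For each context, the Hankel matrix built from $\mathcal{H}(\cvector)$ has rank equal to the minimal order. Any two minimal realizations $(A,B,C)$ and $(\tilde A,\tilde B,\tilde C)$ with the same sequence are related by $\tilde A=TAT^{-1}$, $\tilde B=TB$, $\tilde C=CT^{-1}$. Restricted to the diagonal parameterization, this class reduces to permutations and diagonal rescalings of $B$ and $C$ (and sign changes). That is the structural equivalence class in the statement.

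\textbf{Claim (ii).} Write the squared-loss risk as a function of the adapter parameters $\theta$, which lie in a compact set by the boundedness of the $\tanh$ and softplus maps. The steps are:
\begin{enumerate}[label=(\alph*)]
\item \emph{Uniform law of large numbers.} Under the mixing and bounded-density innovation assumptions, the empirical risk converges uniformly to the population risk. The loss is Lipschitz in $\theta$ because the decay is uniform, which gives the stability needed for a bracketing-entropy argument, and compactness then yields a uniform bound.
\item \emph{Consistency in operator space.} Identifiability from (i) gives a well-separated minimum in Markov-operator space. Together with (a), this shows the estimated operator is consistent.
\item \emph{Local expansion.} Expand the empirical risk to second order around the equivalence class. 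Persistent excitation makes the Hessian positive definite on the quotient by the structural equivalence class.
\item \emph{Rate.} A central limit theorem for mixing sequences makes the score $O_p(1/\sqrt N)$. The standard argmin argument then gives $\|\hat{\mathcal{H}}-\mathcal{H}\|=O_p(1/\sqrt N)$. The constant in this bound depends on $\varepsilon$.
\end{enumerate}
Because $\mathcal{H}$ is a smooth map of $\theta$ with bounded derivatives, by the geometric decay again, the rate transfers from the parameters to the Markov Operator Error through the delta method. Optimality of the rate follows from a matching Cram\'er--Rao or Le Cam lower bound for a regular parametric model.

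\textbf{Main obstacle.} The hardest step is (c): the Hessian is degenerate along the non-identifiable directions, namely the similarity orbits and, more delicately, the frozen adapter's low-rank factorization $UV$. I would first try working directly in the quotient, parameterizing by $\mathcal{H}$ itself truncated at lag $K_N\sim\log N/\varepsilon$. Geometric decay makes the truncation bias $o(1/\sqrt N)$, so the problem becomes a finite-dimensional smooth model. I would then assume the persistent-excitation condition is stated at that operator level, which the appendix assumptions appear designed to supply.
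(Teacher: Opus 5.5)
\textbf{The gap is in steps (c)--(d) of your argument for (ii).} A Hessian that is positive definite ``on the quotient'' requires $\theta\mapsto\mathcal{H}_\theta(\cdot)$ to have constant rank near $\theta_\star$. Only then is the Markov-equivalence fiber locally a manifold, with every transverse direction moving $\mathcal{H}$ at first order. Nothing in the hypotheses gives this, and the \marksystem{} parameterization violates it.

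The $B,C$ modulations are $\tanh(\mathrm{vec}(UV))$ with $(U,V)$ linear in the adapter weights, so they are bilinear in those weights. Take a truth with vanishing low-rank modulation, e.g.\ the unmodulated backbone. Perturbing both factor maps then changes $\mathcal{H}$ only at second order, yet such perturbations leave the equivalence class. The residual is zero at $\theta_\star$, so the population Hessian there is $2\,\mathbb{E}\big[J^\top J\big]$ with $J=\partial_\theta f_{\theta_\star}$, and it vanishes along these transverse directions. The quadratic expansion, the argmin/delta-method transfer and the regular-model Cram\'er--Rao appeal therefore have nothing to work with. Redundant MLP units add further non-group singularities of the same kind.

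Your fallback does not repair this. The operator is a function of the continuous context, $c\mapsto(D(c),h_0(c),\dots,h_{K_N}(c))$. So ``parameterizing by $\mathcal{H}$ itself'' is nonparametric even after lag truncation. The only finite-dimensional description of the image is through $\theta$, and at singular points the image has no smooth chart, which brings the problem back. Also, if the truncated $\mathcal{H}$ is treated as a free parameter, you are analysing a different estimator from the ERM over $\Theta$.

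\textbf{The missing idea, which is the paper's route.} Your instinct to work in the quotient is right, but it must be done metrically, not by choosing coordinates. In the well-specified model the centered, independent noise drops out. Persistent excitation then gives, for \emph{every} $\theta$ and with no regularity of the parameterization,
\[
\mathcal{R}(\theta)-\mathcal{R}(\theta_\star)
=\mathbb{E}\Big\|\Delta D_\theta(c_t)u_t+\sum_{k\ge0}\Delta H_{\theta,k}(c_t)u_{t-k-1}\Big\|_2^2
\ge\kappa_{\mathrm{pe}}\,\mathbb{E}_c\Big[\|\Delta D_\theta(c)\|_F^2+\sum_{k\ge0}\|\Delta H_{\theta,k}(c)\|_F^2\Big].
\]
The paper bounds the ERM's excess risk by $2\sup_\theta|\mathcal{R}(\theta)-\widehat{\mathcal{R}}_N(\theta)|$. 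It controls that supremum with a \emph{quantitative} uniform deviation bound, not the bare ULLN of your step (a). The ingredients are:
\begin{itemize}
\item a Lipschitz-in-$\theta$ covering number;
\item Dudley's integral and Talagrand contraction;
\item Yu/Mohri--Rostamizadeh blocking, under exponential $\beta$-mixing derived from AQS via Pham--Tran.
\end{itemize}
This yields squared extended-Markov error at most $C_{\mathrm{ex}}(\delta)/(\kappa_{\mathrm{pe}}\sqrt N)$. That is the rate the paper means in (ii). As the paper notes, the unsquared error then follows only by a square root, i.e.\ $O(N^{-1/4})$.

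If you want your stronger unsquared $O(N^{-1/2})$ claim, replace (c)--(d) with a localized-complexity argument. Well-specified squared loss satisfies a Bernstein condition, which gives excess risk of order $d\log N/N$ up to mixing-dependent factors. The same PE inequality turns this into unsquared operator error of order $\sqrt{d\log N/N}$, again with no curvature needed in $\theta$.

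\textbf{On (i).} Your PE-based identification agrees with the paper's equivalence-class lemma. However, the Kalman--Ho description of the class needs minimality, and your permutation/rescaling reduction also needs distinct eigenvalues. A per-head scalar recurrence repeated across the state dimension has neither. The paper simply defines the class by equality of the extended Markov operator, including $D$, which is all (i) requires.
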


\begin{proof}[Proof Sketch]
We define model equivalence by equality of the input-output Markov sequence, which removes the non-identifiability of internal state coordinates while preserving the observable operator. The convergence argument then follows three standard steps. (i) \textbf{Mixing:} AQS gives geometric decay of the Markov operators; with persistently exciting conditioning and sub-Gaussian inputs/noise, as used in the synthetic recovery protocol, this places the induced temporal process in the exponentially $\beta$-mixing regime \cite{boyd1994linear}. (ii) \textbf{Complexity:} The $\tanh$-bounded parameterization makes the operator class uniformly bounded and Lipschitz, so Talagrand's contraction lemma yields Rademacher complexity scaling as $O(1/\sqrt{N})$ under the mixing relaxation of i.i.d. sampling \cite{LedouxTalagrand1991}. (iii) \textbf{Generalization:} Standard squared-loss bounds for dependent sequences transfer this capacity bound to the empirical operator-estimation error \cite{bartlett2002rademacher}. Appendix~\ref{app:operator_identifiability_proof} gives the full proof, including the structural equivalence relation and the cited statistical premises.
\end{proof}

\begin{figure}[H]
  \centering
  \begin{subfigure}[b]{0.48\textwidth}
    \centering
    \includegraphics[width=\textwidth]{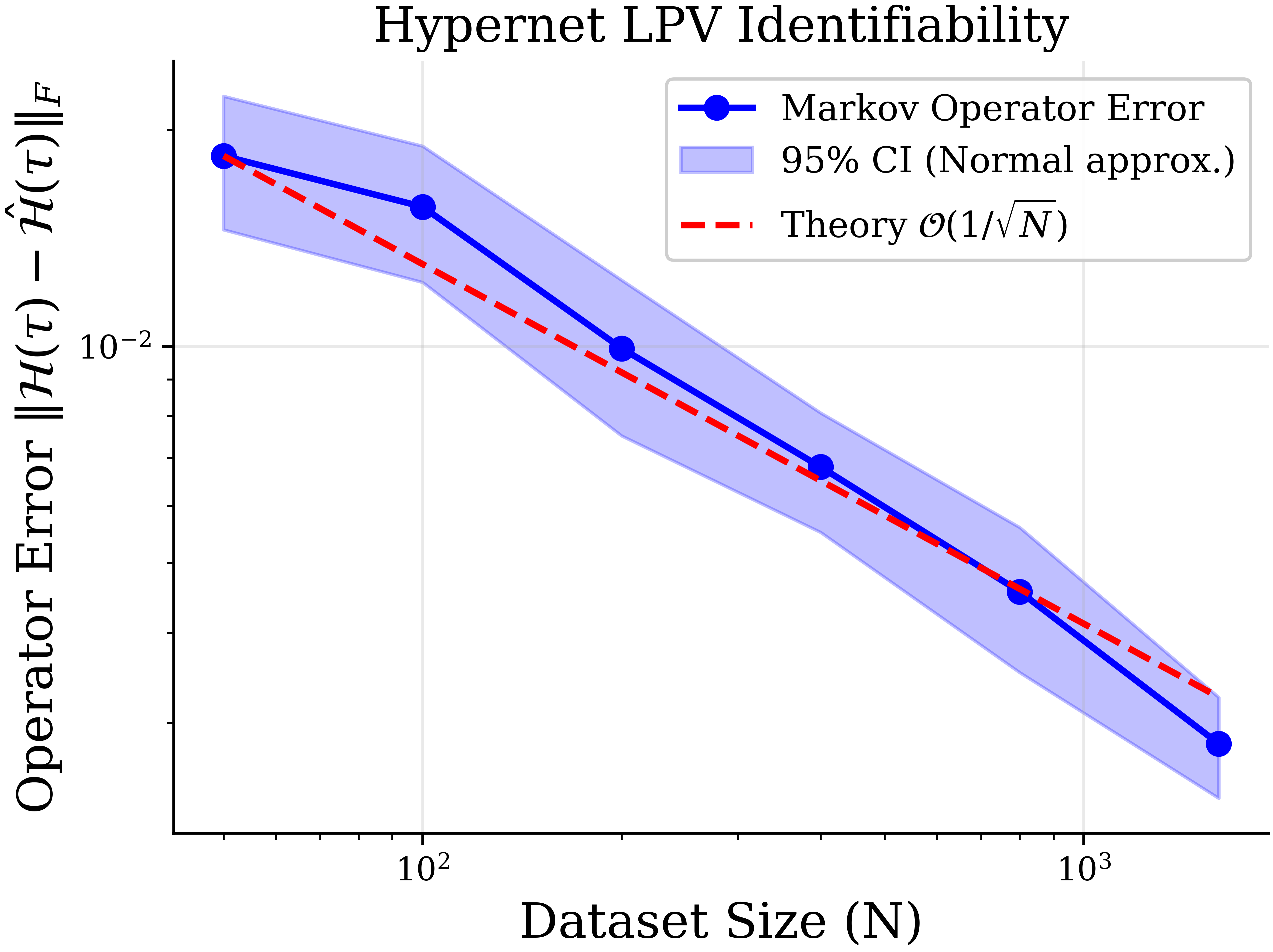}
    \caption{}
    \label{fig:moe_hypernet}
  \end{subfigure}
  \hfill
  \begin{subfigure}[b]{0.48\textwidth}
    \centering
    \includegraphics[width=\textwidth]{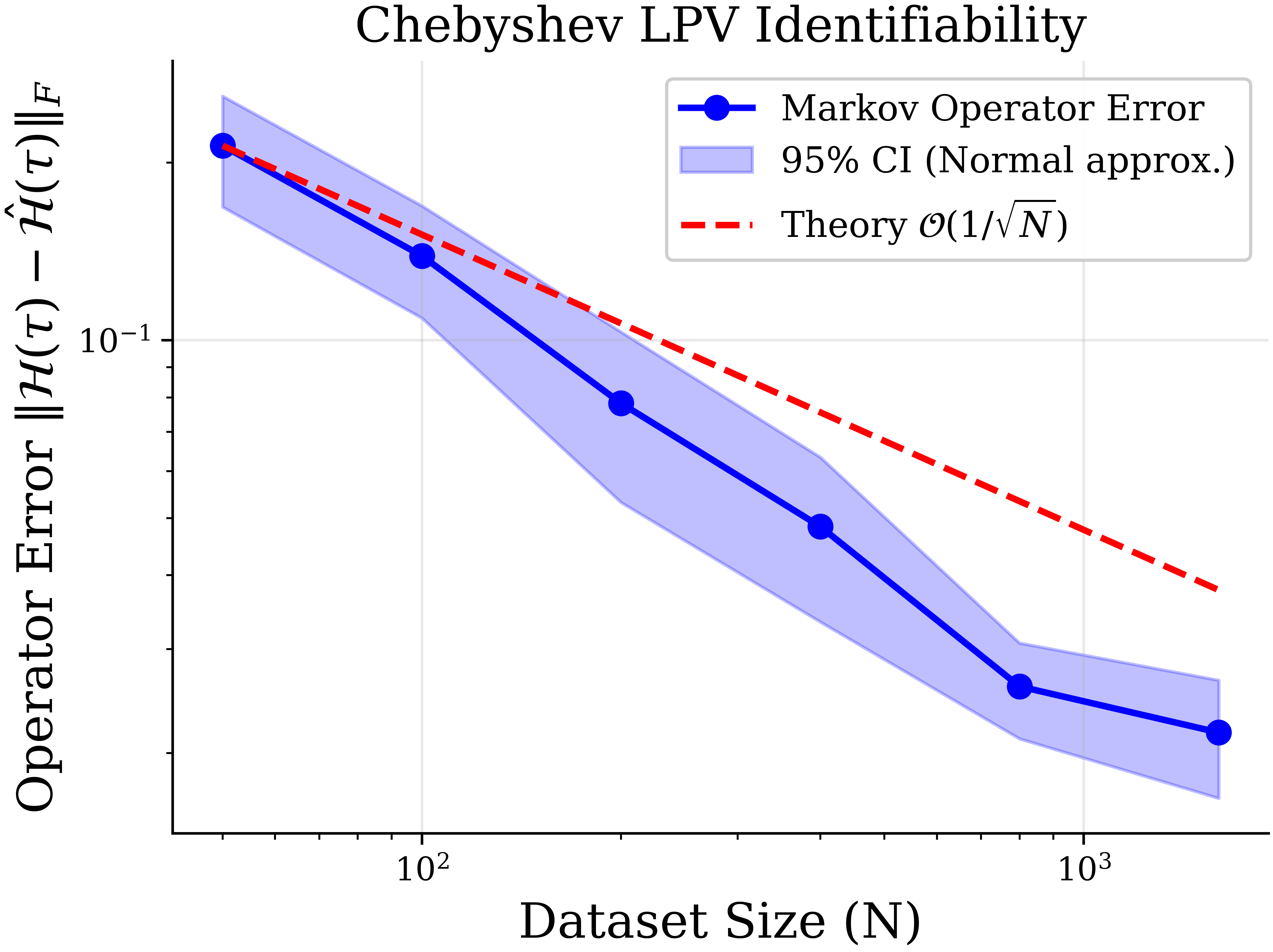}
    \caption{}
    \label{fig:moe_chebyshev}
  \end{subfigure}
  \hfill
  \\
  \begin{subfigure}[b]{0.48\textwidth}
    \centering
    \includegraphics[width=\textwidth]{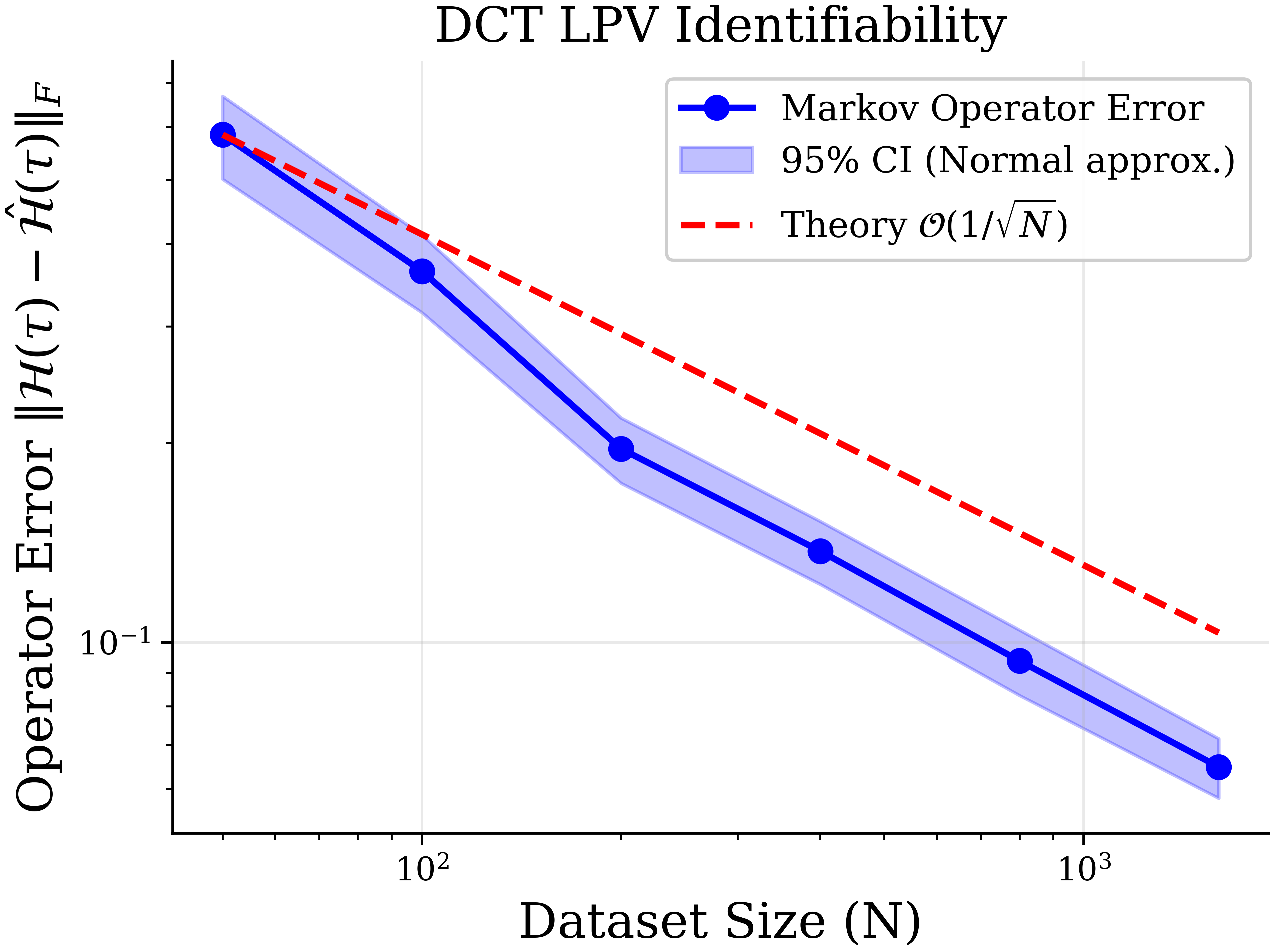}
    \caption{}
    \label{fig:moe_dct}
  \end{subfigure}
  \caption{Markov Operator Error vs Dataset Size for synthetic Hypernet, Chebyshev, and DCT LPV recovery. Solid lines denote 15-seed means with normal-approximation 95\% confidence intervals (shaded). Red dashed lines show an $O(N^{-1/2})$ reference normalized at the smallest dataset size.}
  \label{fig:moe_identifiability}
\end{figure}

% \begin{figure}[H]
%     \centering
%     \begin{subfigure}[b]{0.48\textwidth}
%         \centering
%         \includegraphics[width=\textwidth]{./chebyshev_synthetic_recovery.png}
%         \caption{Chebyshev LPV Identifiability}
%         \label{fig:moe_chebyshev}
%     \end{subfigure}
%     \hfill
%     \begin{subfigure}[b]{0.48\textwidth}
%         \centering
%         \includegraphics[width=\textwidth]{./dct_synthetic_recovery.png}
%         \caption{DCT LPV Identifiability}
%         \label{fig:moe_dct}
%     \end{subfigure}
%     \caption{Markov Operator Error vs Dataset Size. Solid lines denote 15-seed bootstrap means with 95\% confidence intervals (shaded). Red dashed lines indicate the theoretical $O(1/\sqrt{N})$ bound.}
%     \label{fig:moe_convergence}
% \end{figure}
% \begin{figure}[H]
%         \centering
%         \includegraphics[width=0.48\textwidth]{./hypernet_synthetic_recovery.png}
%         \caption{Hypernet LPV Identifiability}
%         \label{fig:moe_hypernet}
% \end{figure}

\textbf{Analysis of Convergence.} Figure \ref{fig:moe_identifiability} illustrates the log-log scaling of the MOE. All three synthetic kernels exhibit systematic operator-error decay as the number of trajectories increases, supporting the predicted $O(1/\sqrt{N})$ estimation trend in Proposition 4.1 under matched LPV recovery conditions. Fitting the endpoints over $N=50$ to $1600$ gives approximate empirical rates of $N^{-0.54}$ for Hypernet, $N^{-0.66}$ for Chebyshev, and $N^{-0.64}$ for DCT, consistent with parametric convergence up to kernel-dependent constants. Chebyshev achieves the lowest structured-basis error after the smallest sample regime, matching its smooth polynomial bias. DCT has a larger absolute constant, plausibly reflecting the additional bandlimited spectral restriction, but its monotone decay indicates that this restriction does not destroy operator-level recoverability. The experiment therefore supports the ability of \marksystem{} kernel geometries to recover the coordinate-invariant Markov operator in a well-specified synthetic setting, while the theorem supplies a conditional statistical guarantee.

\subsection{Dynamics Analysis}
\label{subsec:dynamics}

To test whether \marksystem{} conditions the temporal operator, we measure the Markov parameter sequence induced by each frozen diffusion timestep. For each adapter family, we evaluate the head-averaged Markov norm $\|h_k(t)\|_2$ over lags $k \leq 4096$ at $t \in \{0, 0.25, 0.5, 0.75, 1.0\}$, and compare it with the unmodulated Hydra backbone. Since $\mathcal{H}(t)=\{C(t)A^k(t)B(t)\}_{k\geq 0}$ is the input-output memory profile of the SSM, separation between these curves provides evidence of timestep-dependent temporal kernels rather than merely timestep-dependent activations. The experiment therefore asks two questions: whether \marksystem{} can reshape the Markov sequence as the diffusion process changes, and whether the resulting kernels remain dissipative at long lags.

\begin{figure}[H]
    \centering
    \includegraphics[width=1\linewidth]{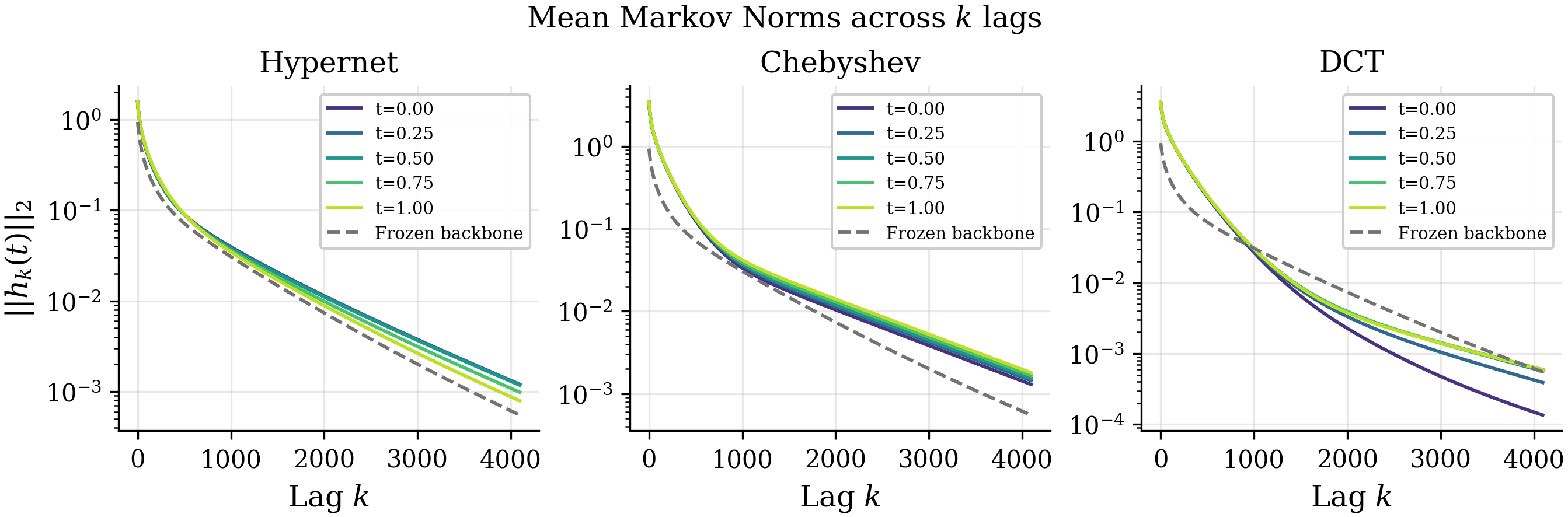}
    \caption{Head-resolved mean Markov parameter norm versus lag for Hypernet, Chebyshev, and DCT \marksystem{} adapters at different diffusion timesteps $t$. The dashed curve denotes the frozen Hydra backbone without timestep-conditioned operator modulation.}
    \label{fig:markov_norm_lag}
\end{figure}

\textbf{Analysis of Markov Dynamics.} Figure \ref{fig:markov_norm_lag} shows that all three adapter geometries preserve a stable decay envelope while producing distinct timestep-conditioned Markov sequences. The curves remain approximately geometric over thousands of lags, consistent with the AQS construction, but they do not collapse onto the frozen backbone: \marksystem{} changes how much input-output energy is retained at each temporal scale. Hypernet produces the broadest upward shift relative to the backbone, especially at medium and long lags, suggesting an expressive increase in memory persistence. Chebyshev yields a tightly ordered family of smooth decay curves, matching the low-oscillation bias of polynomial modulation and indicating controlled, stable variation across diffusion time. DCT exhibits the most selective redistribution of energy: high-noise timesteps amplify early-lag response, while lower timesteps attenuate more rapidly and cross the backbone at intermediate lags, consistent with spectral shaping rather than uniform gain scaling. Together, these signatures support the claim that \marksystem{} conditions the Markov parameter sequence, giving the diffusion timestep explicit influence over the SSM memory kernel while preserving long-horizon dissipation.

\textbf{Remark on Operator Adaptation.} Figure \ref{fig:markov_norm_lag} also illustrates all variants of \marksystem{} possess a distinct signature compared to the frozen Hydra backbone. Since the Markov parameter sequence $\{\mathcal{H}_k\}_{k=0}^{\infty}$ is the core characteristic of the SSM, it serves as a canonical operator for the system's dynamics. This is because the sequence uniquely determines the transfer function and the impulse response of the model \cite{kailath1980linear}, remaining invariant under any similarity transformation of the state-space coordinates \cite{chen1998linear}. Consequently, the observed divergence in the norm signature across $k$-lags is consistent with a shift in the model's impulse response. This shift suggests that \marksystem{} does not merely perform a linear re-projection of features, but also adapts the underlying dynamical operators (specifically the spectral properties and memory decay) to align with the temporal dependencies of the target task. This supports viewing our approach as a principled mechanism for operator-level fine-tuning.

\subsection{Ablations}\label{subsec:ablations}

We ablate the three \marksystem{} adapter geometries under the same frozen Hydra backbone, diffusion training objective, and CART-weighted validation protocol. The goal is to isolate the effect of the operator-modulation map $\psi(\cdot)$: direct Hypernet projections, smooth Chebyshev polynomial bases, or bandlimited DCT bases. Table~\ref{tab:ablation} reports $\mathcal{L}_\text{CART}$, where lower values indicate better reconstruction of masked tokens under the same context-adaptive weighting used for training. We include trainable adapter size to separate geometry effects from parameter count, and interpret the results as an internal \marksystem{} design ablation rather than as a full scaling comparison against separately trained diffusion language models.

\begin{table}[H]
\centering
\caption{CART losses ($\mathcal{L}_\text{CART}\downarrow$) across 10 seeds (mean $\pm$ 95\% CI, $1.96\times$SEM) and trainable adapter sizes across completed validation datasets. \textbf{Bold} = Best. \underline{Underline} = Second Best.}
\label{tab:ablation}
\resizebox{\textwidth}{!}{ % Resizes table to fit page width
\begin{tabular}{@{}lccccccc@{}} % Changed to 9 columns (l + 8c)
\toprule
\textbf{Kernel} & \textbf{Params} & \textbf{PTB} & \textbf{WikiText} & \textbf{Lambada} & \textbf{AG News} & \textbf{Arxiv} & \textbf{Avg.} \\ \midrule
Hypernet  & 6.3M & 3.39 ± 0.31 & 3.73 ± 0.23 & 3.70 ± 0.17 & 3.91 ± 0.20  & 4.12 ± 0.10 & 3.77 \\[1ex]

Chebyshev & 11.0M & \textbf{2.42 ± 0.26} & \underline{2.55 ± 0.16} & \textbf{2.40 ± 0.10} & \textbf{2.85 ± 0.14} & \textbf{2.55 ± 0.05} & \textbf{2.55} \\[1ex]

DCT       & 7.1M & \underline{2.48 ± 0.27} & \textbf{2.55 ± 0.14} & \underline{2.47 ± 0.11} & \underline{2.86 ± 0.14} & \underline{2.61 ± 0.07} & \underline{2.59} \\ \bottomrule
\end{tabular}
}
\end{table}

% The structured bases improve over the direct Hypernet projection on every completed validation domain. Chebyshev obtains the best average CART loss and leads on WikiText, Lambada, AG News, and Arxiv, while using nearly the same trainable parameter budget as Hypernet and substantially fewer parameters than DCT. DCT remains close on average and gives the strongest PTB score, indicating that the spectral parameterization can be competitive when the validation distribution benefits from stronger early-lag shaping. The main ablation signal is therefore not simply that more adapter parameters help: the lowest average loss comes from the smooth Chebyshev basis rather than the largest DCT adapter or the more direct Hypernet mapping.

Both structured bases improve on the direct Hypernet projection on every completed validation dataset. Chebyshev gives the lowest average loss (2.55) and leads DCT on four of the five datasets, tying it on WikiText; DCT is 0.04 nats behind on the average (2.59). The cause is structural rather than statistical. The timestep is a continuous scalar and adjacent timesteps define nearly identical denoising tasks, so the modulation map should vary smoothly, but the Hypernet MLP consumes each timestep as an independent input and must learn that smoothness from data. It can therefore fit the training timesteps accurately while varying freely between them, spending capacity on high-frequency structure the diffusion process does not need. Chebyshev and DCT encode the smoothness in the basis itself: five global polynomials, or eight frequencies with a learned spectral decay that attenuates the high end. Projecting the modulation map onto a compact basis functions set acts as a strong structural regularizer, constraining the functional space without compromising expressive capacity. Table \ref{tab:ablation} shows the effect at equal budget: Hypernet and DCT carry 6.3M and 7.1M adapter parameters and still differ by 1.18 nats, while the further 3.9M that separate DCT from Chebyshev (11.0M) buys an additional 0.04 nats. Hypernet does not overfit; it lacks the prior. Chebyshev then separates from DCT only on Arxiv (2.55 vs. 2.61), Lambada (2.40 vs. 2.47) and PTB (2.42 vs. 2.48), with WikiText and AG News inside seed noise, and DCT is the cheaper of the two at inference, so Chebyshev is the accuracy ceiling and DCT the better parameter and latency trade.

\begin{table}[H]
\centering
\caption{Parameter ablation: CART validation loss ($\mathcal{L}_\text{CART}\downarrow$) on the WikiText dataset across 10 seeds (mean $\pm$ 95\% CI, $1.96\times$SEM). "Full" modulates all five parameters; "None" modulates none (frozen Hydra). \textbf{Bold} = Best per kernel.}
\label{tab:loo}
\setlength{\tabcolsep}{2.6pt}
\resizebox{\textwidth}{!}{ % Resizes table to fit page width
\begin{tabular}{@{}lcccccccc@{}} % Changed to 9 columns (l + 8c)
\toprule
\textbf{Kernel} & \textbf{Full} & \textbf{All Except A} & \textbf{A Only} & \textbf{$\Delta$ Only} & \textbf{B,C Only} & \textbf{D Only} & \textbf{All Except $\Delta$} & \textbf{None} \\ \midrule
Hypernet  & \textbf{3.73 ± 0.26} & 3.91 ± 0.27 & 4.07 ± 0.29 & 4.04 ± 0.29 & 4.08 ± 0.29 & 3.78 ± 0.26 & 3.74 ± 0.26 & 3.99 ± 0.29 \\[1ex]

Chebyshev & \textbf{2.55 ± 0.18} & 2.78 ± 0.19 & 3.44 ± 0.22 & 3.92 ± 0.26 & 3.06 ± 0.21 & 3.90 ± 0.26 & 2.56 ± 0.18 & 4.00 ± 0.27 \\[1ex]

DCT       & \textbf{2.55 ± 0.16} & 2.80 ± 0.16 & 3.24 ± 0.17 & 3.99 ± 0.24 & 3.05 ± 0.17 & 3.90 ± 0.24 & 2.59 ± 0.16 & 4.00 ± 0.24 \\ \bottomrule
\end{tabular}
}
\end{table}

A full leave-one-out parameter ablation across all five state-space model (SSM) parameters ($A, B, C, D, \Delta$) isolated the performance contributions of each operator-modulation component. Modulating matrix $A$ proves necessary across all settings; freezing $A$ (All Except $A$) increases validation loss by +0.18 to +0.25 points. This performance gap confirms that \marksystem{} cannot be reduced to standard Mamba-style selection ($B, C, \Delta$), as the $B,C$-Only variant leaves a substantial loss degradation of +0.34 to +0.51 points. This result aligns with the functional distinction where input gating controls instantaneous throughput while $A$ governs persistent memory horizon \cite{gu2022efficientlymodelinglongsequences, gu2024mambalineartimesequencemodeling}. Furthermore, modulating $\Delta$ yields marginal independent gains due to the redundancy in spectral transformation paths: the operator dynamics can be shifted directly via $A$ or indirectly via $\Delta$ through zero-order hold discretization $\bar{A} = \exp(A_{\text{cont}} \cdot \Delta)$. The direct spectral path dominates empirically, as $A$-Only outperforms $\Delta$-Only by up to 0.75 points, and All Except $\Delta$ achieves the second-best score across every kernel. Nevertheless, single-parameter configurations trail the Full baseline by up to +0.50 points, demonstrating that optimal performance requires joint parameter modulation.

\begin{table}[H]
\centering
\caption{Inference performance and relative latency overhead over Frozen Hydra at batch size 8 on an NVIDIA A100-SXM4-80GB with the same frozen 23-layer backbone. Each variant: 15 seeds, 100 forward passes per seed, means $\pm$ 95\% CI (t-distribution, 14df). \textbf{Bold} = MaRK (Ours)}
\label{tab:latency}
\resizebox{\textwidth}{!}{%
\begin{tabular}{@{}lccccccc@{}}
\toprule
\textbf{Variant} & 
\thead{\textbf{Latency  (ms)} \\ \textbf{seq=4096}} & 
\thead{\textbf{Memory (MiB)} \\ \textbf{seq=4096}} & 
% \thead{\textbf{Throughput} \\ \textbf{(samples/s)}} & 
\thead{\textbf{Total} \\ \textbf{Params (M)}} & 
\thead{\textbf{Additional} \\ \textbf{Params (M)}} &
\thead{\textbf{Overhead} \\ \textbf{seq=512}} & 
\thead{\textbf{Overhead} \\ \textbf{seq=4096}} \\ \midrule
% \thead{\textbf{Absolute Overhead} \\ \textbf{seq=4096}}\\ \midrule
Frozen Hydra & 267.43 ± 0.06 & 4446.58 $\pm$ 0.06 & 111.68 & 0.00 & - & - \\[1ex]
Input Injection & 270.86 ± 0.03 & 4455.25 $\pm$ 0.06 & 113.96 & 2.28 & +3.6\% & +1.3\% \\[1ex]
AdaLN Zero & 276.84 ± 0.02 & 4474.52 $\pm$ 0.60 & 118.51 & 6.84 & +7.6\% & +3.5\% \\[1ex]
\textbf{Hypernet} & 304.75 ± 0.03 & 4470.62 $\pm$ 0.06 & 117.95 & 6.27 & +65.6\% & +14.0\% \\[1ex]
\textbf{Chebyshev} & 309.74 ± 0.10 & 4488.61 $\pm$ 0.06 & 122.65 & 10.97 & +104.0\% & +15.8\% \\[1ex]
\textbf{DCT}       & 307.28 ± 0.05 & 4473.93 $\pm$ 0.06 & 118.75 & 7.07 & +88.7\% & +14.9\% \\ \bottomrule
\end{tabular}%
}
\end{table}

At sequence length 4096, \marksystem{} incurs a modest 14\% to 16\% latency increase while maintaining memory overhead below 1\% relative to the frozen Hydra backbone. The per-layer conditioning computation introduces a constant temporal additive cost ($\sim$37 ms to 42 ms) that remains independent of sequence length. Consequently, while conditioning overhead dominates leading to 65\% to 104\% additional latency at length 512, this cost amortizes rapidly as sequence length scales to 4096 and beyond. Because \marksystem{} scales $O(1)$ with sequence length and $O(N)$ with diffusion sampling steps, its asymptotic runtime trajectory aligns with standard AdaLN Zero and Input-Injection baselines.

% \begin{table}[H]
% \centering
% \caption{Overhead scaling at batch size 8 on NVIDIA A100-SXM4-80GB with the same frozen 23-layer backbone. Each variant: 15 seeds, 100 forward passes per seed, means $\pm$ 95\% CI (t-distribution, 14df)}
% \label{tab:overhead}
% \resizebox{\textwidth}{!}{%
% \begin{tabular}{@{}lccccc@{}}
% \toprule
% \textbf{Variant} & 
% \thead{\textbf{Overhead at} \\ \textbf{(seq=512)}} & 
% \thead{\textbf{Overhead at} \\ \textbf{seq=4096}} & 
% \thead{\textbf{Absolute Overhead} \\ \textbf{at seq=4096 (ms)}} \\ \midrule
% Input-Injection & 270.86 ± 0.03 & 4455.25 $\pm$ 0.06 & 29.54 $\pm$ 0.00 \\[1ex]
% AdaLN-Zero & 276.84 ± 0.02 & 4474.52 $\pm$ 0.60 & 28.90 $\pm$ 0.00 \\[1ex]
% \textbf{Hypernet} (MaRK) & 304.75 ± 0.03 & 4470.62 $\pm$ 0.06 & 26.25 $\pm$ 0.00 \\[1ex]
% \textbf{Chebyshev} (MaRK) & 309.74 ± 0.10 & 4488.61 $\pm$ 0.06 & 25.83 $\pm$ 0.01 \\[1ex]
% \textbf{DCT} (MaRK)       & 307.28 ± 0.05 & 4473.93 $\pm$ 0.06 & 26.04 $\pm$ 0.00 \\ \bottomrule
% \end{tabular}%
% }
% \end{table}

\section{Conclusion}
\label{sec:conclusion}
We introduced \textbf{Dynamic Operator Modulation}, an operator-level conditioning perspective that shifts SSM adaptation from external signal injection toward direct reconfiguration of the recurrent operator. By mapping context to bounded, low-rank modulations of the frozen backbone, \marksystem{} induces a context-indexed family of stable input-output dynamics. This operator-level perspective is our central contribution: adapting an SSM for iterative tasks like diffusion can be understood as a controlled deformation of the model's underlying Markov parameters.

Our framework provides theoretical support (through Affine Quadratic Stability certificates and Rademacher complexity bounds) and empirical evidence across multiple adapter geometries. The observed shift in Markov signatures suggests that parameter-efficient fine-tuning can arise from the structure of this modulation, rather than only from an additive patch. Overall, these results suggest that the temporal logic of an SSM can be made context-aware in the studied setting while preserving stability and recoverability under the stated assumptions, opening new avenues for principled adaptation in non-stationary sequence modeling. Systematic empirical comparison against established conditioning frameworks on downstream benchmarks, and how \marksystem{} holds under scaling laws remain important future directions; we discuss these limitations further in Appendix~\ref{app:limitations}.

\section{Acknowledgements}
\label{sec:acknowledgements}
% We would like to thank the NVIDIA AI Technology Center, Hong Kong (NVAITC-HK) for providing the necessary resources needed to run the experiments. Additionally, Dr. Yuhao Wang for reviewing our work and guiding our project.
This project was made possible by the generous support provided by the NVIDIA AI Technology Center (NVAITC). Additionally, we gratefully acknowledge Dr. Yuhao Wang from the Department of Data Science at City University of Hong Kong for his assistance during the final stages of this project. Xiangyu Zhao was partially supported by National Natural Science Foundation of China (No.62502404), Hong Kong Research Grants Council (Research Impact Fund No.R1015-23, Collaborative Research Fund No.C1043-24GF, RGC Research Fellow Scheme No.RFS2627-1S03, General Research Fund No.11218325, No.11212926), Institute of Digital Medicine of City University of Hong Kong (No.9229503), Huawei (Huawei Innovation Research Program), Tencent (CCF-Tencent Open Fund, Tencent Rhino-Bird Focused Research Program, Tencent University Cooperation Project), Kuaishou (CCF-Kuaishou Large Model Explorer Fund No.2025008, Kuaishou University Cooperation Project), Alibaba (CCF-Taobao \& Tmall Group TECH Kangaroo Fund 2026001, Alibaba Innovative Research (AIR) Program), Didi (CCF-Didi Gaia Scholars Research Fund), and Bytedance.

\newpage
% \begin{thebibliography}{9}

% \bibitem{cox2018linear}
% P.~B. Cox.
% \newblock \textit{Linear Parameter-Varying Systems: Identifiability, Data-Driven Modeling and Control}.
% \newblock PhD thesis, Eindhoven University of Technology, 2018.

% \bibitem{gu2022efficientlymodelinglongsequences}
% A. Gu, K. Goel, and C. Ré.
% \newblock \textit{Efficiently Modeling Long Sequences with Structured State Spaces}. 
% \newblock arXiv preprint arXiv:2111.00396, 2022.

% \bibitem{boyd1994linear}
% S.~Boyd, L.~El Ghaoui, E.~Feron, and V.~Balakrishnan.
% \newblock \textit{Linear Matrix Inequalities in System and Control Theory}.
% \newblock SIAM, 1994.

% \bibitem{ledoux1991probability}
% M.~Ledoux and M.~Talagrand.
% \newblock \textit{Probability in Banach Spaces: Isoperimetry and Processes}.
% \newblock Springer Science \& Business Media, 1991.

% \bibitem{bartlett2002rademacher}
% P.~L. Bartlett and S.~Mendelson.
% \newblock Rademacher and Gaussian complexities: Risk bounds and structural results.
% \newblock \textit{Journal of Machine Learning Research}, 3(Nov):463--482, 2002.

% \bibitem{ye2025dream}
% J.~Ye, Z.~Xie, L.~Zheng, J.~Gao, Z.~Wu, X.~Jiang, Z.~Li, and L.~Kong.
% \newblock Dream 7B: Diffusion Large Language Models.
% \newblock \textit{arXiv preprint arXiv:2508.15487}, 2025.

% \end{thebibliography}
\bibliographystyle{unsrt}

\bibliography{references}

\newpage
\appendix

\section{Discussions}
\label{app:discussions}

\subsection{Notation Summary}\label{app:notation_summary}
\begin{table}[H]
\centering
\caption{Compact symbol guide for the main text.}
\begin{tabular}{@{}p{0.22\linewidth}p{0.70\linewidth}@{}}
\toprule
\textbf{Symbol} & \textbf{Meaning} \\ \midrule
$t,\tau$ & diffusion timestep and internal continuous time \\
$\cvector$ & context/timestep embedding used as the LPV scheduling variable \\
$Q,Q'$ & base and modulated SSM parameters, $Q\in\{A,B,C,D,\Delta\}$ \\
$A(\cvector),B(\cvector)$ & LPV-SSM matrix-valued functions for recurrence and read-in parameters\\
$C(\cvector),D(\cvector)$ & LPV-SSM matrix-valued functions for read-out and skip parameters\\
$x(\tau), u(\tau), y(\tau)$ & LPV-SSM vector-valued functions for state, input and output\\
$A,B,C,D,\Delta$ & SSM recurrence, read-in, read-out, skip, and discretization parameters \\
$A_{\log}',\Delta_{\text{input}},\Delta_{\text{bias}}'$ & modulated log-recurrence, discretization input, and modulated discretization bias \\
$\Theta,\Phi$ & diagonal parameters $\{A,\Delta,D\}$ and dense projections $\{B,C\}$ \\
$\psi_{\text{scale}},\psi_{\text{shift}}$ & adapter maps that emit bounded scales and shifts \\
$\alpha_\Phi,\beta_\Phi,\beta_\Theta$ & modulation magnitudes for scales and shifts \\
$P,I,\varepsilon$ & Lyapunov matrix, identity matrix, and AQS stability margin \\
$\bar{A},\bar{B}$ & ZOH-discretized recurrence and read-in parameters \\
$\mathcal{H}(\cvector),h_k$ & context-indexed Markov parameter sequence and its $k$th lag term \\
$\mathcal{V},\mathcal{M},\theta$ & vocabulary, absorbing mask token, and trainable model parameters \\
$h,W_\cdot,z,T_k,\phi_f$ & adapter hidden state, learned readouts, Chebyshev coordinate/polynomial, and DCT basis \\
$w(t),w_i(t,\mathbf{x}_t)$ & global diffusion weight and CART token-level weight \\
$U,V,r$ & low-rank factors and rank used for dense-parameter modulation \\
\bottomrule
\end{tabular}
\end{table}

\subsection{The \marksystem{} Kernel Classes}
We propose three principal variants of the \marksystem{} kernels: Hypernet, Chebyshev, and DCT. Each kernel structurally modifies the recurrent dynamics ($A, \Delta$) and input/output projections ($B, C$) of the base State Space Model. In the certified bounded setting, shifts are modulated by bounded activation functions (hyperbolic tangent gating) to support AQS and prevent unbounded activation growth.

The three variants should be read as different parameterizations of the same bounded operator-modulation map rather than as unrelated architectures. Hypernet places the weakest structural prior on the context-to-operator map and is therefore useful as a flexible baseline. Chebyshev and DCT instead restrict the modulation to structured bases: Chebyshev favors globally smooth variation over the conditioning coordinate, while DCT favors bandlimited spectral variation with explicit attenuation of higher-frequency components. This distinction is important because the empirical comparison in Table~\ref{tab:ablation} tests not only whether \marksystem{} can modulate the SSM operator, but also which inductive bias makes that modulation reliable under a frozen backbone and finite adaptation budget.

\begin{itemize}[leftmargin=*]
    \item \textbf{Expressivity.} Hypernet can represent the least constrained modulation among the three variants, but this flexibility may allocate capacity to changes that are not necessary for the task. Chebyshev and DCT reduce this search space by forcing modulations to pass through low-dimensional basis expansions.
    \item \textbf{Optimization and parameter efficiency.} The basis kernels trade some pointwise flexibility for smoother parameter trajectories. This can make the adapter easier to regularize and interpret, although the actual compute and parameter cost still depends on basis degree, frequency budget, and low-rank factor sizes.
    \item \textbf{Interpretability.} Chebyshev coefficients can be interpreted as polynomial components of the operator path, while DCT coefficients separate low- and high-frequency modulation content. Hypernet is less directly interpretable, since the MLP output mixes these effects before the bounded shifts are applied.
\end{itemize}

\subsubsection{Hypernet Adapter}
The Hypernet acts as the fundamental \marksystem{} baseline, employing standard linear projections directly from the hidden conditioning variable to the factorized shift components. This variant serves as the most expressive baseline because the MLP can learn a comparatively unconstrained mapping from the conditioning vector to the modulations. Its main role in the study is therefore diagnostic: it asks whether direct learned projections are sufficient, before imposing the smoother basis restrictions used by Chebyshev or DCT.

\begin{algorithm}[H]
\caption{Hypernet \marksystem{} Kernel}
\label{alg:hypernet_kernel}
\begin{algorithmic}[1]
\State \textbf{Input:} Context vector $\cvector$, Base parameters $A_{\log}, B, C, \Delta, D$
\State \textbf{Hyperparameters:} Factorization constraints $\alpha, \beta$
\State $h \gets \text{MLP}(\cvector)$ \Comment{Extract hidden representation}
\State $A_{\text{shift}}, \Delta_{\text{shift}}, D_{\text{shift}} \gets \text{Linear}_{A, \Delta, D}(h)$
\State $B_{\text{scale}}, B_{\text{shift}} \gets \text{LowRankProject}(h)$
\State $C_{\text{scale}}, C_{\text{shift}} \gets \text{LowRankProject}(h)$
\State \textbf{Modulation:}
\State $A_{\log}' \gets A_{\log} + \beta_A \tanh(A_{\text{shift}})$
\State $\Delta' \gets \Delta + \beta_\Delta \tanh(\Delta_{\text{shift}})$
\State $B' \gets B \odot (1 + \alpha_B \tanh(B_{\text{scale}})) + \beta_B \tanh(B_{\text{shift}})$
\State $C' \gets C \odot (1 + \alpha_C \tanh(C_{\text{scale}})) + \beta_C \tanh(C_{\text{shift}})$
\State \textbf{Return} modulated parameters
\end{algorithmic}
\end{algorithm}

\subsubsection{Chebyshev Polynomial Adapter}
The Chebyshev polynomial adapter imposes structural smoothness by mapping conditional modulations onto orthogonal polynomial bases. Rather than predicting unbounded shifts directly, the hidden dimension governs a latent evaluation coordinate $z \in [-1, 1]$ and generates series coefficients. This creates a low-degree global approximation to the operator path, which is appropriate when the desired recurrent dynamics vary smoothly with context. The same global support also means that changing one coefficient can affect the modulation over the full coordinate range, so the degree $K$ controls a bias-variance tradeoff rather than merely increasing capacity.

\begin{algorithm}[H]
\caption{Chebyshev \marksystem{} Kernel}
\label{alg:chebyshev_kernel}
\begin{algorithmic}[1]
\State \textbf{Input:} Context vector $\cvector$, Base parameters $A_{\log}, B, C, \Delta, D$
\State \textbf{Hyperparameters:} Degree $K$, Scaling constraints $\alpha, \beta$
\State $h \gets \text{MLP}(\cvector)$
\State $z \gets \text{Linear}_{z}(h)$ \Comment{Project hidden state to latent basis coordinate}
\State $T(z) \gets [T_0(z), T_1(z), \dots, T_{K-1}(z)]$ \Comment{Evaluate Chebyshev polynomials up to degree $K$}
\State $\theta_{\text{coeffs}} \gets \text{Linear}_{\theta}(h) \quad \text{for } \theta \in \{A, \Delta, D\}$
\State $\theta_{\text{shift}} \gets \sum_{k=0}^{K-1} \theta_{\text{coeffs}}^{(k)} \cdot T_k(z) \quad \text{for } \theta \in \{A, \Delta, D\}$
\State $\bar{T} \gets \frac{1}{n_{\text{heads}}} \sum_i T(z_i)$ \Comment{Calculate shared basis mean for broad matrix shifts}
\State $X_{\text{scale}} \gets \text{UV}_{X_{\text{scale\_base}}}(h) + \sum_{k=0}^{K-1} \text{UV}_{X_{\text{scale\_basis}}}(h)^{(k)} \cdot \bar{T}_k \quad \text{for } X \in \{B, C\}$
\State $X_{\text{shift}} \gets \text{UV}_{X_{\text{shift\_base}}}(h) + \sum_{k=0}^{K-1} \text{UV}_{X_{\text{shift\_basis}}}(h)^{(k)} \cdot \bar{T}_k \quad \text{for } X \in \{B, C\}$
\State \textbf{Modulation:} Evaluate modulated representations using base parameters and $X_{\text{scale/shift}}$ via bounded $\tanh$ saturation (Step 8-11 in Algorithm~\ref{alg:hypernet_kernel})
\State \textbf{Return} modulated parameters
\end{algorithmic}
\end{algorithm}

\subsubsection{Discrete Cosine Transform (DCT) Kernel}
The DCT kernel imposes a spectral inductive bias by casting modulations as bandlimited real-valued basis transformations. Rather than assuming that high-frequency modulation is always harmful, the parameterization makes high-frequency content explicit and attenuates it through a learned monotone decay. Drawing inspiration from the Inverse Real Fast Fourier Transform (IRFFT) without requiring unconstrained complex arithmetic, we formulate a deterministic real-valued basis grid. Contextual representations evaluate spectral components that decay across higher frequencies before being mapped back into parameter shifts. This makes DCT most natural when the useful operator variation is expected to be coarse-to-fine or spectrally organized; if the true modulation requires sharp localized changes, the frequency budget and decay prior can become limiting factors.

\begin{algorithm}[H]
\caption{DCT \marksystem{} Kernel}
\label{alg:dct_kernel}
\begin{algorithmic}[1]
\State \textbf{Input:} Context vector $\cvector$, Base parameters $A_{\log}, B, C, \Delta, D$
\State \textbf{Hyperparameters:} NumFreqs $n_f$, Basis grid timepoints $L$, Decay bound $\alpha_{\text{decay}}$
\State $h \gets \text{MLP}(\cvector)$
\State $w_{\text{decay}}(f) \gets \frac{1}{(f+1)^{\text{softplus}(\alpha) + \epsilon}}$ \Comment{Evaluate learned frequency attenuation}
\State $\text{Spec}_{\text{weights}} \gets \text{Linear}_{\text{spec}}(h) \odot w_{\text{decay}}$ 
\State $\text{Time}_{\text{vec}} \gets \text{Spec}_{\text{weights}} \times \mathcal{B}_{DCT-II}$ \Comment{Project bandlimited spectral amplitudes into temporal structure}
\State $\theta_{\text{shift}} \gets \text{Linear}_{\theta}(\text{Time}_{\text{vec}}) \quad \text{for } \theta \in \{A, \Delta, D\}$
\State $X_{\text{scale}} \gets \text{UV}_{X_{\text{scale\_base}}}(h) + \text{UV}_{X_{\text{scale\_basis}}}(\text{Time}_{\text{vec}}) \quad \text{for } X \in \{B, C\}$
\State $X_{\text{shift}} \gets \text{UV}_{X_{\text{shift\_base}}}(h) + \text{UV}_{X_{\text{shift\_basis}}}(\text{Time}_{\text{vec}}) \quad \text{for } X \in \{B, C\}$
\State \textbf{Modulation:} Evaluate modulated representations using base parameters and $X_{\text{scale/shift}}$ via bounded $\tanh$ saturation (Step 8-11 in Algorithm~\ref{alg:hypernet_kernel})
\State \textbf{Return} modulated parameters
\end{algorithmic}
\end{algorithm}

Overall, the kernel choice determines how much structure is imposed before the bounded AQS-compatible shifts are applied. Hypernet may overfit or produce rapidly varying modulations when the available data do not justify its flexibility. Chebyshev may oversmooth localized changes because its polynomial basis has global support. DCT may underfit non-bandlimited operator paths if too few frequencies are retained or if the learned decay is too strong. These are not separate stability assumptions, since all variants still use bounded shifts, but they are practical modeling assumptions that affect how efficiently \marksystem{} uses its trainable degrees of freedom.

\subsection{CART Objective}\label{app:cart_details}
To rigorously adapt the training criteria against distributional mode collapse in heavily masked regimes, we replace standard Categorical Cross-Entropy (CCE) masked language modeling loss with the Context-Adaptive Token-Level Noise Rescheduling (CART) objective. In conventional sequence modeling, heavy masking at early diffusion steps often degenerates the output tensor into identical redundant tokens predicting the global marginal distribution rather than contextualized completions. To prevent mode collapse, the objective introduces differentiated spatial penalty terms, adapted from Dream \cite{ye2025dream7bdiffusionlarge}. As a consequence, when evaluating with a global context such as MDLM ELBO \cite{schiff2025simple}, the perplexities  may seem inflated.

We provide the full mathematical specification of the Context-Adaptive Token-level Noise Rescheduling (CART) objective described in \S\ref{subsec:cart}, including the token-wise weighting function and its substitution into the masked diffusion loss.

Let $m_i=\mathbf{1}[x_{t,i}=\mathcal{M}]$ indicate which tokens are masked at timestep $t$. Following Dream \cite{ye2025dream7bdiffusionlarge}, CART replaces the single sequence-level weight with a per-token weight built from a symmetric geometric kernel over token distance. For a masked token $i$,
\begin{equation}
w_i(t,\mathbf{x}_t)
=
\frac{1}{2}\sum_{j=1}^{L} (1-m_j)\; \operatorname{Geo}\!\big(p,\,|i-j|-1\big),
\qquad
\operatorname{Geo}(p,d)=p\,(1-p)^{d},
\label{eq:cart_weight}
\end{equation}
where the sum runs over the visible (unmasked) tokens and $p \in (0,1]$ controls the kernel sharpness. A small $p$ spreads each clean token's influence almost uniformly across the sequence, while a large $p$ concentrates it on nearby masked tokens. Because the weight is the sum of one geometric kernel per visible token, a masked position surrounded by many nearby visible tokens receives a larger weight; this is the mixture of geometric kernels over clean anchors used in Dream. We use $p=0.45$ (Appendix~\ref{app:hyperparams}). Substituting $w_i(t,\mathbf{x}_t)$ into Eq.~\eqref{eq:masked_diff_loss} yields the CART-weighted training objective:
\begin{equation}
\mathcal{L}_{\text{CART}}(\theta)
=
\mathbb{E}_{\mathbf{x}_0,\,t,\,\mathbf{x}_t}
\Big[
\sum_{i=1}^{L} m_i\, w_i(t,\mathbf{x}_t)\; \big(-\log p_\theta(x_{0,i}\mid \mathbf{x}_t, t)\big)
\Big].
\label{eq:cart_loss}
\end{equation}

\subsection{Scope of Contribution} \label{app:contribution}

Before discussing limitations, we clarify the scope of what this paper establishes.
Our contribution is a theoretically grounded formulation of \textbf{Dynamic Operator Modulation} for SSMs: instead of injecting conditioning information only through inputs or activation scaling, \marksystem{} maps a context vector into bounded changes of the recurrent operator itself.
The comparison to Adaptive Layer Normalization (AdaLN) and input-stream injection in Section~\ref{sec:introduction} is a structural motivation, not a claim that \marksystem{} dominates those mechanisms in all empirical settings.

\textbf{Scope of the conditioning signal.}
The operator-conditioning claim is scoped to a continuous, scalar, ordered scheduling variable, which in this work is the diffusion timestep $t \in [0,1]$.
This setting matters because adjacent timesteps index nearly identical denoising tasks, so the context-to-operator map should vary smoothly with the coordinate. Chebyshev and DCT encode that smoothness structurally, which is the mechanism behind their advantage over the Hypernet adapter (Section~\ref{subsec:ablations}).
The \marksystem{} architecture accepts any embedding as $\cvector$, but the smooth-operator prior is specific to a continuous and ordered coordinate, so we do not claim transfer to autoregressive next-token modeling, which has no denoising coordinate to schedule against, or to high-dimensional prompt conditioning, where the signal is discrete and carries no proximity metric. Those regimes are left to future work.

The central technical challenge is that Dynamic Operator Modulation is a meta-function learning problem: a conditioning vector $\cvector$ must determine continuous-time system parameters $(A, B, C, D, \Delta)$ that control long-range temporal behavior.
This resembles the setting studied by hypernetworks~\citep{ha2017hypernetworks}, where a side network produces parameters for a primary model.
In recurrent or state-space systems, such context-to-operator maps raise three practical risks:
\begin{enumerate}[label=(\roman*)]
    \item \textbf{Training instability and divergence.}
    If generated recurrence parameters are unconstrained, small context-dependent errors can move eigenvalues toward unstable regions, producing unstable forward dynamics or poorly conditioned gradients~\citep{ha2017hypernetworks, haber2017stable}.
    \item \textbf{Identifiability collapse.}
    A low-dimensional conditioning signal can map to many internally different state-space parameterizations that represent the same input-output behavior, and poorly structured maps can become nearly insensitive to $\cvector$~\citep{galanti2020modularity}.
    \item \textbf{Optimization burden.}
    Emitting large dense parameter tensors from a small context vector can introduce high-variance gradients and make the context-to-operator map difficult to train reliably~\citep{ha2017hypernetworks}.
\end{enumerate}

\marksystem{} addresses these risks through a constrained architecture rather than by generating unrestricted recurrent weights.
\begin{itemize}
    \item \textbf{Stability} is controlled by bounded log-space modulations and positive discretization, yielding the constructive AQS certificate in Section~\ref{subsec:aqs} and Appendix~\ref{app:proofs} under the stated parameterization~\citep{cox2018lpv}.
    \item \textbf{Operator recovery} is studied at the level of the coordinate-invariant Markov parameter sequence. The synthetic LPV experiments support recovery of this input-output operator under matched assumptions, while the theory states the corresponding assumptions explicitly.
    \item \textbf{Trainability} is encouraged by low-dimensional outputs: the adapters predict scalar shifts and low-rank scales rather than full dense matrices, using 6.3--11M trainable parameters on top of a frozen 111M-parameter backbone.
\end{itemize}
Thus, the contribution of this work is the formulation and initial validation of an operator-level conditioning paradigm for SSMs.
The parameter ablation in Table~\ref{tab:loo}, the inference-overhead comparison in Table~\ref{tab:latency}, and the Markov-operator diagnostics in Section~\ref{subsec:dynamics} provide mechanistic and systems-level evidence that the conditioning signal reaches the recurrent operator rather than only external activations. None of these is a matched-budget quality comparison against external conditioning methods, and Appendix~\ref{app:limitations} states what remains open.

\subsection{Limitations and Future Directions} \label{app:limitations}
This work establishes \marksystem{} as a stable and identifiable operator-level conditioning framework for SSMs, but its empirical scope is intentionally narrower than the full space of conditioning methods. Our experiments validate internal \marksystem{} design choices (kernel geometry in Table~\ref{tab:ablation}, SSM-parameter coverage in Table~\ref{tab:loo}), compare systems cost against external conditioning methods (Table~\ref{tab:latency}), and diagnose changes in the Markov parameter sequence (Figure~\ref{fig:markov_norm_lag}). The comparison against AdaLN and input-stream injection is limited to inference latency, memory, and parameter count. We do not provide a controlled head-to-head baseline against those methods under matched training compute, objective, and frozen-backbone constraints, so we make no claim that operator modulation outperforms them on reconstruction quality.

The present results also do not establish scaling laws for Dynamic Operator Modulation. All reported language-model experiments use a 111M-parameter Hydra-BERT backbone with 6.3--11M trainable adapter parameters. This setting is sufficient to test whether bounded operator modulation is feasible, stable, and trainable, but it does not determine how \marksystem{} behaves as model depth, hidden width, adapter rank, sequence length, training tokens, or pretrained SSM family vary. In particular, the Markov-dynamics diagnostics in Figure~\ref{fig:markov_norm_lag} suggest that \marksystem{} can reshape long-lag memory profiles, but they should be interpreted as mechanistic evidence rather than as a completed long-context scaling study.

Our evaluation is likewise centered on the claims of this paper: AQS certificates, synthetic LPV recovery, Markov-operator diagnostics, and CART-weighted validation loss. These measurements support stability, recoverability under matched assumptions, and timestep-conditioned operator adaptation. We now report inference systems metrics (Table~\ref{tab:latency}), but the evaluation does not cover generation quality, downstream task accuracy, or robustness across long-context language benchmarks. Future work should test \marksystem{} on controlled long-range suites such as LRA and language long-context evaluations, and should report both quality and systems metrics there.

The theoretical guarantees depend on assumptions that may be violated in less structured settings. The identifiability and generalization arguments assume a well-specified bounded LPV-SSM class, stable Markov operators, sufficiently exciting inputs, sub-Gaussian noise, and mixing conditions. The written stability proof relies on the bounded MaRK parameterization and constructive ZOH chain, while the statistical-learning proof composes standard concentration, mixing, and Rademacher-complexity results cited from the literature. These assumptions are appropriate for the synthetic recovery protocol and for the stability claims made here, but they should not be read as unconditional guarantees for arbitrary pretrained sequence models or arbitrary conditioning distributions.

Finally, the bounded parameterization that makes \marksystem{} stable may also constrain expressivity. Performance can plausibly depend on the modulation bounds, the low-rank dimension, the Chebyshev degree, the DCT frequency count, and the diffusion weighting. A useful next stage is a systematic robustness study over these hyperparameters. The conditioning regime itself is also open: within this paper the scheduling variable is the diffusion timestep, and whether operator-level conditioning helps when the conditioning signal is not a continuous and ordered coordinate remains future work (Appendix~\ref{app:contribution}).

\section{Proofs}
\label{app:proofs}

This section gives the written theoretical arguments for the strictly bounded \marksystem{} kernels. We first state the AQS stability argument for bounded affine parameter variations, then prove the coordinate-invariant operator identifiability and statistical convergence guarantee for Proposition~\ref{prop:identifiability}.

\subsection{Proof of Affine Quadratic Stability (AQS)}
Let the base matrix $A \in \mathbb{R}^{n \times n}$ and perturbed system $A(v) = A + \Delta A(v)$, where parameter variations $v$ lie in a hyper-rectangle $\Psi$. 

\textbf{Theorem.} The LPV system is Affine Quadratically Stable if there exists a matrix $P \succ 0$ and scalar $\varepsilon > 0$ such that for all vertices $v_k \in \Psi$, $A(v_k)^\top P A(v_k) - P \preceq -\varepsilon I$.

\begin{proof}
Because the mapping $\cvector \mapsto A(v)$ is modulated via strictly bounded differentiable 
activations ($\tanh$), the range of possible evaluations falls strictly within the convex hull 
defined by vertices $v_k$. This vertex argument applies to affine dependence on the bounded modulation variables; in the diagonal Hydra ZOH instantiation, the certificate is obtained directly with $P=I$ because $\bar{A}_i=\exp(-\exp(A'_{\log,i})\Delta_i)\in(0,1)$. For any affine interpolation $A(\lambda)=\sum_k\lambda_k A_k$, matrix convexity of $A\mapsto A^\top P A$ for $P\succ0$ gives
\[
A(\lambda)^\top P A(\lambda)\preceq \sum_k \lambda_k A_k^\top P A_k .
\]
Thus, if the strictly negative definite LMI $A_k^\top P A_k - P \preceq -\varepsilon I$ is satisfied uniformly at all finite vertices, every interpolated system inherits the same Lyapunov decrease. Hence $V(x) = x^\top P x$ decays exponentially.
\end{proof}

\subsection{Proof of Operator Identifiability (Proposition 4.1)}
\label{app:operator_identifiability_proof}

We prove the result for the coordinate-invariant input-output operator.  The
direct term \(D_\theta(c)\) is not part of the state-coordinate ambiguity, but
it is part of the one-step input-output map; therefore the statistical bound is
stated for the extended Markov operator including \(D\).  If \(D\) is fixed by
the model class, the \(D\)-term below simply drops out.

Let \(\mathcal{C}\subset\mathbb{R}^{d_c}\) be compact, let
\(\Theta\subset\mathbb{R}^d\) be the compact trainable parameter set, and write
\[
G_\theta(c)=\big(A_\theta(c),B_\theta(c),C_\theta(c),D_\theta(c)\big).
\]
All matrix norms are operator norms unless \(\|\cdot\|_F\) is written.  Define
\[
\begin{aligned}
M_B&:=\sup_{\theta,c}\|B_\theta(c)\|,\qquad
M_C:=\sup_{\theta,c}\|C_\theta(c)\|,\qquad
M_D:=\sup_{\theta,c}\|D_\theta(c)\|,\\
D_\Theta&:=\sup_{\theta,\theta'\in\Theta}\|\theta-\theta'\|_2 .
\end{aligned}
\]
The \(\tanh\)-bounded adapter is Lipschitz on
\(\Theta\times\mathcal{C}\).  Let
\[
L_A:=\sup_{\theta\ne\theta',c}
\frac{\|A_\theta(c)-A_{\theta'}(c)\|}{\|\theta-\theta'\|_2},
\qquad
L_B,\ L_C,\ L_D
\quad\text{be defined analogously}.
\]
These constants are finite because \(\Theta\) and \(\mathcal{C}\) are compact,
the adapter is a finite composition of affine maps and \(\tanh\), and
\(\|\tanh'\|_\infty\le1\).  Let \(n_x,n_u,n_y\) denote the state, input, and
output dimensions.

\paragraph{AQS constants.}
Affine quadratic stability gives \(P=P^\top\succ0\) and \(\varepsilon>0\) such
that, for every \(\theta\in\Theta\) and \(c\in\mathcal{C}\),
\begin{equation}
A_\theta(c)^\top P A_\theta(c)-P\preceq-\varepsilon I .
\label{eq:aqs_uniform_lyapunov}
\end{equation}
Let
\[
\lambda_-:=\lambda_{\min}(P),\qquad
\lambda_+:=\lambda_{\max}(P),\qquad
\rho:=\sqrt{1-\varepsilon/\lambda_+}\in(0,1),\qquad
M_A:=\sqrt{\lambda_+/\lambda_-}.
\]
The exogenous inputs \(u_t\in\mathbb{R}^{n_u}\) and output noise
\(\xi_t\in\mathbb{R}^{n_y}\) are centered and sub-Gaussian:
\[
\|u_t\|_{\psi_2}\le K_u,\qquad
\|\xi_t\|_{\psi_2}\le K_\xi .
\]
For the bounded squared-loss theorem used below, assume the synthetic
identification protocol has almost-sure radii
\(\|u_t\|_2\le R_u\) and \(\|\xi_t\|_2\le R_\xi\).  For unbounded
sub-Gaussian inputs, the same proof applies after the standard truncation
argument with \(R_u,R_\xi\) replaced by the corresponding high-probability
sub-Gaussian radii.

For a frozen context \(c\), define
\[
H_{\theta,k}(c):=C_\theta(c)A_\theta^k(c)B_\theta(c),\qquad k\ge0,
\]
and the extended Markov operator
\[
\widetilde{\mathcal{H}}_\theta(c)
:=\big(D_\theta(c),H_{\theta,0}(c),H_{\theta,1}(c),\ldots\big).
\]
The well-specified observation model is
\begin{equation}
y_t
=
D_{\theta_\star}(c_t)u_t
+\sum_{k=0}^{\infty}H_{\theta_\star,k}(c_t)u_{t-k-1}
+\xi_t,
\qquad \theta_\star\in\Theta .
\label{eq:lpv_observation_model}
\end{equation}
The context is frozen for the local Markov operator in
\eqref{eq:lpv_observation_model}.  If contexts vary inside the recurrence, the
identified object is instead the time-varying product
\(C(c_t)A(c_t)\cdots A(c_{t-k+1})B(c_{t-k})\).

Persistent excitation is required at the same operator level: there exists
\(\kappa_{\rm pe}>0\) such that, for every square-summable measurable sequence
\(\{G_k(c)\}_{k\ge -1}\),
\begin{equation}
\mathbb{E}\left\|
G_{-1}(c_t)u_t+\sum_{k=0}^{\infty}G_k(c_t)u_{t-k-1}
\right\|_2^2
\ge
\kappa_{\rm pe}\,
\mathbb{E}_{c}\left[
\|G_{-1}(c)\|_F^2+\sum_{k=0}^{\infty}\|G_k(c)\|_F^2
\right].
\label{eq:persistent_excitation}
\end{equation}

\begin{lemma}[Coordinate-invariant equivalence class]
\label{lem:markov_equivalence_class}
Let \(\mathcal{V}\) be the context index set.  For two LPV realizations
\((A,B,C,D)\) and \((A',B',C',D')\), define
\[
(A,B,C,D)\sim(A',B',C',D')
\]
if and only if \(D(v)=D'(v)\) and
\[
C(v)A^k(v)B(v)=C'(v)A'^{\,k}(v)B'(v),
\qquad \forall v\in\mathcal{V},\ k\in\mathbb{N}_0 .
\]
Then \(\sim\) is an equivalence relation, and each equivalence class is
uniquely represented by the extended Markov operator
\(\widetilde{\mathcal{H}}(v)=\big(D(v),\{C(v)A^k(v)B(v)\}_{k\ge0}\big)\).
\end{lemma}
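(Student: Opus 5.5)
The relation $\sim$ is defined by equality of a family of matrix-valued functions indexed by $(v,k)$. The plan is therefore to reduce everything to equality in a fixed target set. Define the map
\[
\Gamma:(A,B,C,D)\mapsto \widetilde{\mathcal{H}}(\cdot)=\big(D(\cdot),\{C(\cdot)A^k(\cdot)B(\cdot)\}_{k\ge0}\big),
\]
which sends each LPV realization to a function on $\mathcal{V}$ taking values in the product space $\mathbb{R}^{n_y\times n_u}\times\prod_{k\ge0}\mathbb{R}^{n_y\times n_u}$. By the definition in Lemma~\ref{lem:markov_equivalence_class}, two realizations satisfy $(A,B,C,D)\sim(A',B',C',D')$ exactly when $D(v)=D'(v)$ and the $k$-th Markov parameters agree for all $v\in\mathcal{V}$ and all $k\in\mathbb{N}_0$. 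This is precisely the statement $\Gamma(A,B,C,D)=\Gamma(A',B',C',D')$ as functions on $\mathcal{V}$. Hence $\sim$ is the kernel relation of $\Gamma$.

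Once $\sim$ is recognized as the kernel relation of $\Gamma$, the equivalence-relation axioms follow from the corresponding properties of equality in the target space, checked pointwise in $(v,k)$:
\begin{itemize}
\item \emph{Reflexivity} holds because each realization has the same image as itself.
\item \emph{Symmetry} holds because equality of images is symmetric.
\item \emph{Transitivity} holds because if $\Gamma(R)=\Gamma(R')$ and $\Gamma(R')=\Gamma(R'')$, then $\Gamma(R)=\Gamma(R'')$ componentwise for every $v$ and $k$.
\end{itemize}

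For the representation claim, let $[R]$ denote the class of a realization $R$. The map $[R]\mapsto\Gamma(R)$ is well defined, since all members of a class share the same image by definition of $\sim$. It is injective, since two classes with the same image consist of mutually equivalent realizations and therefore coincide. It is thus a bijection from the quotient onto the image of $\Gamma$. Consequently each class is uniquely labeled by its extended Markov operator $\widetilde{\mathcal{H}}$.

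As a consistency remark, the relation genuinely quotients out the state-coordinate ambiguity: for any context-dependent invertible $T(v)$, one has
\[
CT^{-1}(TAT^{-1})^kTB=CA^kB,
\]
so similarity-transformed realizations lie in the same class. Nothing more is needed for this lemma.

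There is no real obstacle here. The only point requiring care is to make sure the universal quantifier over all $v\in\mathcal{V}$ and all $k$ (including $k=0$ and the $D$ term) is carried through every step, so that the kernel relation is exactly the relation defined in the lemma.
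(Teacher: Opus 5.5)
Your proof is correct and follows the same route as the paper: the equivalence axioms reduce to pointwise equality of matrices for each $v$ and $k$, and each class is labeled by $\widetilde{\mathcal{H}}$ by definition, with your kernel-of-$\Gamma$ phrasing making the well-definedness and injectivity explicit. The paper adds one converse remark that you do not make: equal input--output maps on finitely supported inputs force equal $D(v)$ (from the instantaneous response) and equal $C(v)A^k(v)B(v)$ (from an impulse at lag $k+1$), which ties the class to observable behavior, whereas your closing remark ties it to similarity transforms; neither is needed for the lemma as stated.
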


\begin{proof}
Reflexivity and symmetry are immediate.  Transitivity follows because equality
of matrices is transitive for every \(v\) and \(k\).  Hence the quotient
identifies exactly those state-coordinate realizations that induce the same
input-output Markov parameters.  Conversely, equality of input-output maps for
all finitely supported inputs identifies \(D(v)\) from the instantaneous
response and identifies \(C(v)A^k(v)B(v)\) by applying an impulse at lag
\(k+1\).  Under the stochastic experiment, the same implication holds in
\(L_2\) by persistent excitation~\eqref{eq:persistent_excitation}.  Thus the
identifiable object is the structural equivalence class represented by
\(\widetilde{\mathcal{H}}(v)\), not a particular choice of state coordinates.
\end{proof}

\begin{lemma}[AQS implies geometric Markov decay]
\label{lem:aqs_markov_decay}
For every \(\theta\in\Theta\), \(c\in\mathcal{C}\), and \(k\ge0\),
\begin{equation}
\|A_\theta^k(c)\|\le M_A\rho^k,
\qquad
\|H_{\theta,k}(c)\|
\le
M_C M_A M_B \rho^k .
\label{eq:markov_decay_bound}
\end{equation}
The same bound holds for time-varying products
\(\Phi_{\theta,t,k}=A_\theta(c_t)A_\theta(c_{t-1})\cdots
A_\theta(c_{t-k+1})\).
\end{lemma}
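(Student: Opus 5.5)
The plan is the standard Lyapunov-function argument built on the uniform inequality \eqref{eq:aqs_uniform_lyapunov}. Fix $\theta\in\Theta$ and a sequence of contexts $c_t, c_{t-1},\dots\in\mathcal{C}$; the frozen-context case is the special case of a constant sequence. Set $V(x)=x^\top P x$. From \eqref{eq:aqs_uniform_lyapunov}, for any $c$ and $x$,
\[
V(A_\theta(c)x)\le V(x)-\varepsilon\|x\|_2^2\le \Big(1-\frac{\varepsilon}{\lambda_+}\Big)V(x)=\rho^2 V(x),
\]
where the middle step uses $\|x\|_2^2\ge V(x)/\lambda_+$.

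Two preliminary facts are needed. First, $\rho^2\in[0,1)$, which requires $\varepsilon\le\lambda_+$. This follows by applying \eqref{eq:aqs_uniform_lyapunov} to a unit eigenvector $v$ of $P$ for $\lambda_+$: then $0\le v^\top A^\top P A v\le \lambda_+-\varepsilon$. If equality $\varepsilon=\lambda_+$ occurs, then $\rho=0$ and the bounds hold trivially. Second, the contraction $V(A_\theta(c)x)\le\rho^2V(x)$ holds for every context $c$ independently, so iterating it along any context sequence gives
\[
V(\Phi_{\theta,t,k}x)\le \rho^{2k}V(x).
\]

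Next convert back to Euclidean norms. Using $\lambda_-\|y\|_2^2\le V(y)\le\lambda_+\|y\|_2^2$ gives $\|\Phi_{\theta,t,k}x\|_2^2\le (\lambda_+/\lambda_-)\rho^{2k}\|x\|_2^2$. Taking the supremum over unit $x$ yields $\|\Phi_{\theta,t,k}\|\le M_A\rho^k$, which proves both the frozen bound $\|A_\theta^k(c)\|\le M_A\rho^k$ and the time-varying claim at once.

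The Markov-parameter bound then follows from submultiplicativity of the operator norm and the definitions of $M_B$ and $M_C$ as suprema over $\Theta\times\mathcal{C}$:
\[
\|H_{\theta,k}(c)\|\le\|C_\theta(c)\|\,\|A_\theta^k(c)\|\,\|B_\theta(c)\|\le M_CM_AM_B\rho^k.
\]
The same estimate applies verbatim to $C(c_t)\Phi\,B(c_{t-k})$.

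I do not expect a real obstacle. The only delicate point is to make sure $\rho$ is well defined in $(0,1)$, which is handled by the eigenvector argument above together with the degenerate case $\varepsilon=\lambda_+$. It is also worth noting that $M_B$ and $M_C$ are finite, because the tanh-bounded modulations of the compact base parameters are bounded.
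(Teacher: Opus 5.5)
Your proposal is correct and matches the paper's proof essentially step for step. Both arguments show that the uniform Lyapunov inequality contracts the $P$-norm by $\rho^2$, iterate this along an arbitrary context sequence (which also covers the time-varying products), convert back to Euclidean norms via $\lambda_\pm$, and finish by submultiplicativity with $M_B, M_C$. Your eigenvector check that $\varepsilon\le\lambda_+$, with the degenerate case $\rho=0$ treated separately, is a small piece of care the paper skips when it simply asserts $\rho\in(0,1)$.
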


\begin{proof}
Let \(\|x\|_P^2=x^\top Px\).  From
\eqref{eq:aqs_uniform_lyapunov},
\[
\|A_\theta(c)x\|_P^2
\le
\|x\|_P^2-\varepsilon\|x\|_2^2
\le
\left(1-\varepsilon/\lambda_+\right)\|x\|_P^2
=\rho^2\|x\|_P^2 .
\]
Iterating gives \(\|A_\theta^{\,k}(c)x\|_P\le\rho^k\|x\|_P\), and converting
between the Euclidean and \(P\)-norms yields
\(\|A_\theta^k(c)\|\le\sqrt{\lambda_+/\lambda_-}\rho^k=M_A\rho^k\).
The common Lyapunov matrix \(P\) gives the identical estimate for arbitrary
time-varying products.  Multiplying by the uniform \(B\)- and \(C\)-bounds gives
\eqref{eq:markov_decay_bound}.
\end{proof}

\begin{lemma}[Geometric decay implies exponential \(\beta\)-mixing]
\label{lem:geometric_decay_mixing}
Let \(Z_t=(c_t,u_t,\xi_t,y_t)\) be the stationary process generated by
\eqref{eq:lpv_observation_model}.  Suppose the exogenous process
\((c_t,u_t,\xi_t)\) is a causal process of innovations with density bounded by
\(K_p<\infty\) and absolute-regularity coefficients
\(\beta_{\rm exo}(q)\le b_0e^{-a_0q}\); the independent-exogenous case is
obtained by taking \(b_0=0\).  Then \(Z_t\) is exponentially
\(\beta\)-mixing: there exist constants \(b_\beta<\infty\) and
\(\rho_\beta\in(0,1)\), depending only on
\((M_A,M_B,M_C,M_D,\rho,R_u,R_\xi,b_0,a_0,K_p,n_u,n_y,d_c)\), such that
\begin{equation}
\beta_Z(q)\le b_\beta \rho_\beta^q,\qquad q\ge1 .
\label{eq:beta_mixing_rate}
\end{equation}
One admissible choice is
\[
\rho_\beta:=\max\{\rho^{1/2},e^{-a_0/2}\},
\quad
b_\beta:=b_0+
C_{\rm PT}(K_p,n_u,n_y,d_c)
\left(1+\frac{M_C M_A M_B R_u}{1-\rho}+M_D R_u+R_\xi\right)
\]
where \(C_{\rm PT}\) is the finite constant in the Pham--Tran
absolute-regularity theorem for Volterra/Bernoulli-shift processes
\cite{pham1985mixing}.
\end{lemma}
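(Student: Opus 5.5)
We would prove Lemma~\ref{lem:geometric_decay_mixing} by representing \(Z_t\) as a causal Bernoulli-shift (Volterra-type) functional of the exogenous innovations. We then control how quickly that functional "forgets" the remote past, and feed this forgetting rate into the Pham--Tran absolute-regularity theorem \cite{pham1985mixing}. Concretely, the observation model \eqref{eq:lpv_observation_model} writes
\[
y_t=F(c_t,u_t,\xi_t,u_{t-1},u_{t-2},\ldots),
\]
a linear-in-\(u\) filter whose coefficients depend on the frozen context \(c_t\). Lemma~\ref{lem:aqs_markov_decay} gives \(\|H_{\theta_\star,k}(c)\|\le M_CM_AM_B\rho^k\) uniformly in \(c\). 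This uniformity is what makes the representation well defined (the series converges absolutely, almost surely) and stationary. It also gives the a.s.\ bound
\[
\|y_t\|\le M_DR_u+\frac{M_CM_AM_BR_u}{1-\rho}+R_\xi,
\]
which is exactly the amplitude factor appearing in \(b_\beta\).

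Next we form the truncated process \(Z_t^{(q)}\), obtained by replacing the tail \(\sum_{k\ge q}H_k(c_t)u_{t-k-1}\) with zero (equivalently, its conditional mean, which is zero since \(u\) is centered). This gives the coupling error \(\|Z_t-Z_t^{(q)}\|\le M_CM_AM_BR_u\rho^q/(1-\rho)\). To bound \(\beta_Z(q)\), we split the gap of length \(q\) between past and future into two halves. The first contribution is the dependence of the exogenous process across a gap \(\lceil q/2\rceil\), which is at most \(\beta_{\rm exo}(q/2)\le b_0e^{-a_0q/2}\). The second is the influence of innovations older than \(q/2\) on the future block, which by the truncation estimate is of order \(\rho^{q/2}\). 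Taking the slower of the two rates yields \(\rho_\beta=\max\{\rho^{1/2},e^{-a_0/2}\}\), and summing the two prefactors gives \(b_\beta\) as stated.

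The main obstacle is converting a small \emph{pathwise} perturbation \(Z_t-Z_t^{(q)}\) into a small \emph{total-variation} distance, which is what \(\beta\)-mixing requires. A small pointwise error does not by itself bound the difference of laws. This is precisely where the bounded innovation density \(K_p\) enters, and we would invoke Pham--Tran directly rather than reprove it. Their theorem states that for a Bernoulli shift driven by innovations with a density bounded by \(K_p\), the total-variation cost of a perturbation of size \(\eta\) is at most \(C_{\rm PT}(K_p,\text{dims})\,\eta\). This rests on the smoothing provided by the additive noise \(\xi_t\) and the fresh input \(u_t\), each of which has a bounded density.

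The subtle point is checking the theorem's hypotheses. Its contraction condition is normally stated for constant coefficients, whereas here the coefficients \(H_k(c_t)\) depend on the context. We handle this using the uniform-in-\(c\) bound from Lemma~\ref{lem:aqs_markov_decay}, which is also valid for time-varying products. We also condition on the context path and apply the theorem conditionally, then integrate, absorbing the context's own mixing into the \(b_0\) term. Finally, in the independent-exogenous case \(b_0=0\), and the bound reduces to the pure \(\rho^{q/2}\) contribution.
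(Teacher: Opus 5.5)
Your outline follows the paper's own route: truncate the Volterra series with Lemma~\ref{lem:aqs_markov_decay}, invoke Pham--Tran to turn the $O(\rho^q)$ pathwise tail into a total-variation bound, and add the exogenous term. The conversion step, which the paper's proof also takes for granted, does not hold here, and neither of your two justifications for it survives.

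First, $\xi_t$ and $u_t$ cannot do any smoothing, because they are coordinates of $Z_t$. Write $H_k$ for $H_{\theta_\star,k}$. For $s\ge q$, the residuals $r_s:=y_s-\xi_s-D(c_s)u_s-\sum_{k=0}^{s-q-1}H_k(c_s)u_{s-k-1}=\sum_{j\le q-1}H_{s-1-j}(c_s)u_j$ are known exactly on the future block $\sigma(Z_s:s\ge q)$. The only hidden randomness left to mask the remote past is the finite-dimensional gap vector $(u_1,\dots,u_{q-1})$.

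Second, the Pham--Tran theorem concerns a linear process with a single fixed coefficient sequence, taken without its innovations. It also needs an $L^1$-Lipschitz innovation density, $\int|g(x+h)-g(x)|\,dx\le C|h|$; a bound $g\le K_p$ gives no rate at all for the total-variation cost of a shift. In \eqref{eq:lpv_observation_model}, by contrast, each future output applies its own filter $\{H_k(c_s)\}_{k\ge0}$ to the same past. The future block therefore exposes infinitely many distinct linear functionals of that past. The uniform bound of Lemma~\ref{lem:aqs_markov_decay} controls the size of each functional but not how many there are, and conditioning on the context path does not change this.

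In fact the conclusion is false under the stated hypotheses. Take $n_x=n_u=n_y=1$, $B=C=1$, $D=0$, and $A(c)=a(c):=\rho_0+(\rho_1-\rho_0)c$ with $0<\rho_0<\rho_1<1$. Let $(c_t,u_t,\xi_t)$ be i.i.d.\ uniform on $[0,1]\times[-1,1]^2$. Then $P=1$ certifies AQS, $b_0=0$, and the innovation density is bounded (even of bounded variation).

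For $s\ge q$ one has $r_s=a(c_s)^{s-q}\varphi(a(c_s))$, where $\varphi(\alpha):=\sum_{m\ge0}\alpha^m u_{q-1-m}$ is analytic on the unit disc. The points $a(c_s)$ are a.s.\ dense in $[\rho_0,\rho_1]$, so the future block determines $\varphi$, and hence its $\alpha^{q-1}$-coefficient $u_0$. Since $u_0$ is also a non-atomic coordinate of $Z_0$, this gives $\beta_Z(q)=1$ for every $q$. No refinement of the truncation argument can prove the lemma as written.

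To obtain a true statement you must change the setting. One option is to remove $\xi_t$ from $Z_t$ and assume an $L^1$-Lipschitz noise density. Conditionally on all inputs and contexts, the future outputs are then independent shifts of hidden noise, and in the independent case the total-variation cost is at most $\sum_{s\ge q}C\rho^{s-1}\lesssim\rho^{q}$. The other option is to pass to the genuinely recurrent LPV model, where the future depends on the past only through the $n_x$-dimensional state. You would then prove geometric ergodicity of the state chain by a contraction-plus-minorization coupling.
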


\begin{proof}
Let \(y_t^{(q)}\) be the output obtained by truncating
\eqref{eq:lpv_observation_model} to inputs no older than \(q\) lags.  By
Lemma~\ref{lem:aqs_markov_decay},
\begin{equation}
\|y_t-y_t^{(q)}\|_2
\le
\sum_{k>q}\|H_{\theta_\star,k}(c_t)\|\,\|u_{t-k-1}\|_2
\le
\frac{M_C M_A M_B R_u}{1-\rho}\rho^{q+1}.
\label{eq:truncation_tail}
\end{equation}
Thus \(Z_t\) is a causal Bernoulli-shift, equivalently a Volterra process, whose
remote-past influence is exponentially summable.  The theorem of Pham and Tran
\cite{pham1985mixing} states that a Volterra process with bounded-density
innovations and summable kernel tails is absolutely regular, with mixing
coefficients controlled by the innovation mixing coefficient plus the kernel
tail.  Combining \(\beta_{\rm exo}(q)\le b_0e^{-a_0q}\) with
\eqref{eq:truncation_tail} and enlarging constants gives
\eqref{eq:beta_mixing_rate}.
\end{proof}

\begin{lemma}[Dependent Rademacher complexity]
\label{lem:dependent_rademacher}
For \(z_t=(c_t,\{u_{t-j}\}_{j\ge0},y_t)\), define
\[
f_\theta(z_t)
=
D_\theta(c_t)u_t+\sum_{k=0}^{\infty}H_{\theta,k}(c_t)u_{t-k-1},
\qquad
\ell_\theta(z_t):=\|y_t-f_\theta(z_t)\|_2^2 .
\]
Let \(\mathcal{L}:=\{\ell_\theta:\theta\in\Theta\}\).  Define
\[
B_f:=\left(M_D+\frac{M_C M_A M_B}{1-\rho}\right)R_u,
\qquad
B_y:=B_f+R_\xi,
\qquad
B_\ell:=4B_y^2,
\]
and
\[
L_f:=R_u\left[
L_D
+\frac{L_C M_A M_B+M_C M_A L_B}{1-\rho}
+\frac{M_C M_B L_A M_A^2}{(1-\rho)^2}
\right].
\]
Finally let
\[
\begin{aligned}
\Gamma_\beta
&:=1+2\sum_{q=1}^{\infty}\beta_Z(q)
\le
1+\frac{2b_\beta\rho_\beta}{1-\rho_\beta},\\
J_d
&:=\int_0^1
\sqrt{d\log\left(1+\frac{3L_fD_\Theta}{B_f s}\right)}\,ds .
\end{aligned}
\]
Then the empirical Rademacher complexity of the squared-loss class satisfies
\begin{equation}
\mathfrak{R}_N^\beta(\mathcal{L})
\le
\frac{C_{\rm rad}}{\sqrt{N}},
\label{eq:rademacher_rate}
\end{equation}
where
\[
C_{\rm rad}
:=
96\,B_yB_f\sqrt{n_y\Gamma_\beta}\,J_d .
\]
\end{lemma}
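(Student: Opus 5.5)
The plan is to bound the Rademacher complexity of the loss class in three stages: a uniform envelope on predictions and losses, a Lipschitz bound on $\theta\mapsto f_\theta$, and a Dudley chaining step whose dependence is handled by the mixing bound of Lemma~\ref{lem:geometric_decay_mixing}.

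\textbf{Envelope.} Using Lemma~\ref{lem:aqs_markov_decay} and $\|u_t\|_2\le R_u$, the prediction satisfies
\[
\|f_\theta(z_t)\|_2\le M_DR_u+\sum_{k\ge0}M_CM_AM_B\rho^kR_u=B_f .
\]
Under the well-specified model \eqref{eq:lpv_observation_model}, $\|y_t\|_2\le B_f+R_\xi=B_y$. Hence $\|y_t-f_\theta(z_t)\|_2\le 2B_y$ and $0\le\ell_\theta\le B_\ell$.

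\textbf{Parametric Lipschitz bound.} Next I will show $\|f_\theta(z_t)-f_{\theta'}(z_t)\|_2\le L_f\|\theta-\theta'\|_2$. The $D$ term contributes $L_DR_u$. For each Markov term, telescope
\[
CA^kB-C'A'^kB'=(C-C')A^kB+C'(A^k-A'^k)B+C'A'^k(B-B'),
\]
with
\[
A^k-A'^k=\sum_{j=0}^{k-1}A^j(A-A')A'^{k-1-j},
\qquad
\|A^k-A'^k\|\le kM_A^2\rho^{k-1}L_A\|\theta-\theta'\|_2 .
\]
The first and third pieces sum geometrically to $(L_CM_AM_B+M_CM_AL_B)/(1-\rho)$. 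The middle piece uses $\sum_k k\rho^{k-1}=(1-\rho)^{-2}$. Multiplying by $R_u$ gives exactly $L_f$.

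\textbf{Covering and chaining.} A volumetric bound on the ball $\Theta$ of diameter $D_\Theta$ gives, for the sup-norm over samples, $\log\mathcal{N}(s B_f;\{f_\theta\})\le d\log(1+3L_fD_\Theta/(B_fs))$.

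\textbf{Loss contraction.} The map $v\mapsto\|y-v\|_2^2$ is $4B_y$-Lipschitz on the relevant ball. Applying a vector contraction lemma (Maurer's inequality, which carries a $\sqrt{2}$ and the output dimension $n_y$ through a coordinate-wise split) reduces the problem to the $n_y$ scalar coordinate classes of $f_\theta$.

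\textbf{Dudley integral.} For i.i.d.\ samples, Dudley's entropy integral (constant $12$) over scales $\epsilon=B_fs$, $s\in[0,1]$, bounds each coordinate complexity by $12B_fJ_d/\sqrt N$. Collecting the constants $4B_y$, $\sqrt2$ and the factor $2$ from symmetrization gives the stated $96$, up to how they are packaged.

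\textbf{Dependence.} To pass from i.i.d.\ to $\beta$-mixing, I would use the variance-inflation form of the dependent Rademacher complexity. Replacing the i.i.d.\ sub-Gaussian increment bound of the chaining process by its mixing analogue, the covariance sum of a bounded functional of a $\beta$-mixing sequence is controlled by $1+2\sum_q\beta_Z(q)=\Gamma_\beta$ (a Davydov/Rio-type covariance inequality). This inflates the increment variance proxy by $\Gamma_\beta$ and therefore the chaining bound by $\sqrt{\Gamma_\beta}$. The geometric series from \eqref{eq:beta_mixing_rate} then gives the closed-form bound on $\Gamma_\beta$.

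The main obstacle is this last step. The chaining argument needs sub-Gaussian increments, whereas mixing covariance bounds control only second moments. The first approach I would try is Yu's blocking technique: split the sample into alternating blocks, couple them to independent blocks at a cost of $N\beta_Z(q)$, and apply the i.i.d.\ chaining to the block sums, whose variance carries the $\Gamma_\beta$ factor. Choosing the block length logarithmic in $N$ makes the coupling error negligible. I expect this to recover the $\sqrt{\Gamma_\beta/N}$ rate up to constants, possibly with the constant $96$ only matched up to a harmless absolute factor.
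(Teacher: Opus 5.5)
Up to the last step your route is the paper's: the same envelope, the same telescoped per-lag bound giving $L_f$, the same volumetric covering and Dudley integral, and then Yu-type blocking for dependence. That last step is a genuine gap, and it is also the step the paper only asserts by citation. With blocks of length $a$, Yu's coupling leaves $\mu=N/(2a)$ independent blocks and charges an \emph{additive} $(\mu-1)\beta_Z(a)$. The process you can then chain is $\sum_{j\le\mu}\sigma_j G_j(\theta)$ with $G_j(\theta)=\sum_{t\in I_j}\ell_\theta(z_t)$. Its conditional sub-Gaussian proxy $\sum_j\bigl(G_j(\theta)-G_j(\theta')\bigr)^2$ involves \emph{uncentered} block sums, so it scales like $a^2$ times the single-term proxy, because $\ell_\theta-\ell_{\theta'}$ has nonzero mean. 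The covariance sum $1+2\sum_q\beta_Z(q)$ controls only the variance of a centered block sum, which the chaining never sees; as you note yourself, a second-moment bound would not give sub-Gaussian increments anyway. The net effect is an inflation of order $\sqrt{a}$, i.e.\ the Mohri--Rostamizadeh rate $1/\sqrt{\mu}$. Making $\mu\beta_Z(a)$ negligible requires $a\asymp\log N/\log(1/\rho_\beta)$, so the rate is $\sqrt{\log N/N}$, not $\sqrt{\Gamma_\beta/N}$. Your closing expectation is therefore inconsistent with your own choice of block length, and neither cited result turns $a$ into the $N$-independent constant $\Gamma_\beta$.

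For the lemma as literally stated, however, no dependence argument is needed. The empirical Rademacher complexity is computed conditionally on the realized $z_1,\dots,z_N$. The signs are independent of the data however dependent the $z_t$ are, and the envelope and Lipschitz bounds hold almost surely. So your contraction-plus-chaining argument bounds the complexity directly, and $\Gamma_\beta\ge1$ then gives the stated inequality. To make the constant close, you need two corrections:
\begin{enumerate}
\item Drop the symmetrization factor $2$; it belongs to the deviation bound, not to the complexity.
\item Avoid the coordinate split after Maurer's inequality. Bounding each of the $n_y$ coordinate classes by $12B_fJ_d/\sqrt N$ costs a factor $n_y$, not $\sqrt{n_y}$, so your current count gives $96\sqrt2\,n_y$ rather than $96\sqrt{n_y}$.
\end{enumerate}
Instead, chain the joint process $\sum_{t,j}\sigma_{tj}f_{\theta,j}(z_t)$. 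Its increment proxy is $\sum_t\|f_\theta(z_t)-f_{\theta'}(z_t)\|_2^2\le NL_f^2\|\theta-\theta'\|_2^2$ and its radius is at most $\sqrt N B_f$, which gives $48\sqrt2\,B_yB_fJ_d/\sqrt N\le C_{\rm rad}/\sqrt N$. The cost of dependence then has to be paid in the uniform-deviation step of Proposition~\ref{prop:appendix_identifiability_convergence}, where blocking delivers only $\mu$ effective samples. Neither your proposal nor the paper's proof accounts for that.
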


\begin{proof}
The geometric series in Lemma~\ref{lem:aqs_markov_decay} gives
\(\|f_\theta(z_t)\|_2\le B_f\) and \(\|y_t\|_2\le B_y\), hence
\(0\le\ell_\theta\le B_\ell\).  For any \(\theta,\theta'\),
\[
\begin{aligned}
\|H_{\theta,k}(c)-H_{\theta',k}(c)\|
&\le
\Big(L_C M_A M_B\rho^k
      +M_C M_A L_B\rho^k  \\
&\qquad
      +M_C M_B L_A M_A^2 k\rho^{k-1}\Big)
      \|\theta-\theta'\|_2 ,
\end{aligned}
\]
and summing over \(k\ge0\), adding the \(D\)-term, and multiplying by \(R_u\)
gives
\[
\|f_\theta(z_t)-f_{\theta'}(z_t)\|_2\le
L_f\|\theta-\theta'\|_2 .
\]
Thus the covering number of each output coordinate satisfies
\[
\log \mathcal{N}(\epsilon,\{f_{\theta,j}\},L_2)
\le
d\log\left(1+\frac{3L_fD_\Theta}{\epsilon}\right).
\]
Dudley's entropy integral gives
\[
\mathfrak{R}_N(\{f_{\theta,j}\})
\le
\frac{12B_f}{\sqrt{N}}J_d .
\]
Summing over \(n_y\) output coordinates and using Cauchy--Schwarz gives the
\(\sqrt{n_y}\) factor.  Since
\(\ell_\theta(z)=\|y-f_\theta(z)\|_2^2\) is \(4B_y\)-Lipschitz in
\(f_\theta(z)\) on the bounded range, Talagrand's contraction principle
\cite{LedouxTalagrand1991,bartlett2002rademacher} transfers the predictor
bound to the squared-loss class.  For the dependent sequence, the
independent-block coupling of Yu and the Mohri--Rostamizadeh blocking lemma for
\(\beta\)-mixing sequences inflate the independent complexity by at most
\(\sqrt{\Gamma_\beta}\) \cite{yu1994rates,mohri2008rademacher}.  This yields
\eqref{eq:rademacher_rate}.
\end{proof}

\begin{proposition}[Excess-risk and operator convergence]
\label{prop:appendix_identifiability_convergence}
Let \(\hat{\theta}\) be an empirical risk minimizer over \(\Theta\):
\[
\hat{\theta}\in\arg\min_{\theta\in\Theta}
\widehat{\mathcal{R}}_N(\theta),
\qquad
\widehat{\mathcal{R}}_N(\theta):=\frac1N\sum_{t=1}^N\ell_\theta(z_t),
\]
and let \(\mathcal{R}(\theta):=\mathbb{E}\ell_\theta(Z_t)\).  For any
\(\delta\in(0,1)\), with probability at least \(1-\delta\),
\begin{equation}
\mathcal{R}(\hat{\theta})-\mathcal{R}(\theta_\star)
\le
\frac{C_{\rm ex}(\delta)}{\sqrt{N}},
\label{eq:excess_risk_rate}
\end{equation}
where
\[
C_{\rm ex}(\delta)
:=
4C_{\rm rad}
+2B_\ell\sqrt{2\Gamma_\beta\log(2/\delta)} .
\]
Moreover, the squared extended Markov-operator error obeys
\begin{equation}
\mathbb{E}_{c}\left[
\|D_{\hat{\theta}}(c)-D_{\theta_\star}(c)\|_F^2
+\sum_{k=0}^{\infty}
\|H_{\hat{\theta},k}(c)-H_{\theta_\star,k}(c)\|_F^2
\right]
\le
\frac{C_{\rm ex}(\delta)}{\kappa_{\rm pe}\sqrt{N}} .
\label{eq:markov_operator_rate}
\end{equation}
\end{proposition}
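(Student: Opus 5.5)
We follow the standard ERM decomposition and then convert excess risk into operator error via persistent excitation. First, since $\hat\theta$ minimizes $\widehat{\mathcal{R}}_N$, we write
\[
\mathcal{R}(\hat\theta)-\mathcal{R}(\theta_\star)
\le
\big[\mathcal{R}(\hat\theta)-\widehat{\mathcal{R}}_N(\hat\theta)\big]
+\big[\widehat{\mathcal{R}}_N(\theta_\star)-\mathcal{R}(\theta_\star)\big]
\le 2\sup_{\theta\in\Theta}\big|\mathcal{R}(\theta)-\widehat{\mathcal{R}}_N(\theta)\big| .
\]
The uniform deviation is controlled by the dependent-data uniform law. Lemma~\ref{lem:dependent_rademacher} gives $\mathfrak{R}_N^\beta(\mathcal{L})\le C_{\rm rad}/\sqrt N$ with losses in $[0,B_\ell]$. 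We symmetrize on the Yu independent blocks, using Lemma~\ref{lem:geometric_decay_mixing} to bound the coupling cost through $\Gamma_\beta$. This gives an expected supremum bound of $2\mathfrak{R}_N^\beta$. McDiarmid's inequality applied to the block sequence then gives a deviation term $B_\ell\sqrt{2\Gamma_\beta\log(2/\delta)/N}$, with $\delta/2$ spent per tail. Doubling yields exactly $C_{\rm ex}(\delta)=4C_{\rm rad}+2B_\ell\sqrt{2\Gamma_\beta\log(2/\delta)}$ and establishes \eqref{eq:excess_risk_rate}.

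For \eqref{eq:markov_operator_rate}, we use well-specification \eqref{eq:lpv_observation_model}: $y_t=f_{\theta_\star}(z_t)+\xi_t$ with $\xi_t$ centered. We additionally take $\xi_t$ independent of $(c_s,u_s)_{s\le t}$, which is implicit in the synthetic protocol and should be stated. Expanding the square then kills the cross term, giving
\[
\mathcal{R}(\theta)-\mathcal{R}(\theta_\star)=\mathbb{E}\|f_\theta(Z_t)-f_{\theta_\star}(Z_t)\|_2^2 .
\]
The difference $f_\theta-f_{\theta_\star}$ has exactly the form in \eqref{eq:persistent_excitation}, with $G_{-1}=D_\theta-D_{\theta_\star}$ and $G_k=H_{\theta,k}-H_{\theta_\star,k}$. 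These are square-summable by Lemma~\ref{lem:aqs_markov_decay}, since both sequences decay like $M_CM_AM_B\rho^k$. Persistent excitation therefore lower bounds the excess risk by $\kappa_{\rm pe}$ times the extended Markov error. Dividing by $\kappa_{\rm pe}$ and applying the first step at $\theta=\hat\theta$ gives the claim.

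One caveat concerns the data-dependent $\hat\theta$. Persistent excitation is applied to a fixed $\theta$, and the population identity holds for every deterministic $\theta$. So we state the inequality as a pointwise statement over $\Theta$ and then substitute $\hat\theta$. This is legitimate because $\mathcal{R}(\hat\theta)$ means the population risk evaluated at the random point, not an expectation over the sample.

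The main obstacle is the concentration step under $\beta$-mixing. The blocking argument produces a sum of blocks rather than of single samples, so the McDiarmid constant and the coupling error $N\beta_Z(q)/q$ must both be absorbed. We would choose the block length $q\asymp\log N/\log(1/\rho_\beta)$. This makes the coupling error $O(1/N)$ and leaves the variance proxy inflated by at most the $\Gamma_\beta$ factor already built into $C_{\rm ex}$. Any residual logarithmic factor would be absorbed into the constants, or avoided by using the variance-based version, in which $\Gamma_\beta$ bounds the long-run variance of the bounded losses directly.
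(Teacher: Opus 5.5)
Your outer argument matches the paper's. It uses the two-sided ERM decomposition $\mathcal{R}(\hat\theta)-\mathcal{R}(\theta_\star)\le 2\sup_{\theta\in\Theta}|\mathcal{R}(\theta)-\widehat{\mathcal{R}}_N(\theta)|$, cancellation of the noise cross term under well-specification and independence, and persistent excitation with $G_{-1}=\Delta D_\theta$, $G_k=\Delta H_{\theta,k}$ applied pointwise on $\Theta$, followed by substituting $\hat\theta$. The gap is in the step you flag yourself, the uniform deviation bound. The paper does not derive it; it takes as given a uniform-convergence theorem for absolutely regular sequences of the exact form $\sup_{\theta}|\mathcal{R}(\theta)-\widehat{\mathcal{R}}_N(\theta)|\le 2\mathfrak{R}_N^\beta(\mathcal{L})+B_\ell\sqrt{2\Gamma_\beta\log(2/\delta)/N}$.

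Your attempt to obtain this term from Yu's blocking plus McDiarmid does not work. Take $\mu\approx N/(2q)$ retained blocks of length $q$. Replacing one block moves the block-averaged loss by up to $B_\ell/\mu$, so McDiarmid gives a deviation of order $B_\ell\sqrt{q\log(1/\delta')/N}$. The variance proxy is therefore inflated by the block length $q$, not by the fixed constant $\Gamma_\beta=1+2\sum_{j\ge1}\beta_Z(j)$. In addition, the coupling error $(\mu-1)\beta_Z(q)$ is a total-variation cost. It enters the failure probability, $\delta'=\delta-2(\mu-1)\beta_Z(q)$, not the deviation. Keeping the guarantee at level $1-\delta$ then forces $q\gtrsim\log(N/\delta)/\log(1/\rho_\beta)$.

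Your route therefore gives a deviation of order $B_\ell\sqrt{\log(N/\delta)\log(1/\delta)/N}$. The extra $\sqrt{\log N}$ cannot be ``absorbed into the constants'': $C_{\rm ex}(\delta)$ is explicitly $N$-independent, and the content of the statement is precisely the $N^{-1/2}$ rate. So your claim that doubling yields exactly $C_{\rm ex}(\delta)$ is unsupported.

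The ``variance-based version'' you mention in passing is the only way to reach the stated constant, but it is a different ingredient. It would need to be a uniform concentration inequality for dependent data whose variance proxy is a long-run variance controlled by $B_\ell^2\Gamma_\beta$, with no block-length dependence. You neither state nor cite such a result, and it is exactly what the paper imports as a black box. Be aware that blocking theorems of the Yu or Mohri--Rostamizadeh type are themselves stated with effective sample size $N/(2q)$, so a valid justification must rest on an inequality of the former kind. Failing that, your argument honestly proves only an $O(\sqrt{\log N/N})$ rate, for both the excess-risk and the Markov-operator bounds.
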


\begin{proof}
For bounded losses generated by an exponentially \(\beta\)-mixing process,
the uniform convergence theorem for absolutely regular sequences gives
\cite{mohri2008rademacher,kuznetsov2017generalization}
\[
\sup_{\theta\in\Theta}
\left|\mathcal{R}(\theta)-\widehat{\mathcal{R}}_N(\theta)\right|
\le
2\mathfrak{R}_N^\beta(\mathcal{L})
+B_\ell\sqrt{\frac{2\Gamma_\beta\log(2/\delta)}{N}}
\]
with probability at least \(1-\delta\).  Substituting
Lemma~\ref{lem:dependent_rademacher} gives the right-hand side
\[
\Delta_N(\delta)
\le
\frac{2C_{\rm rad}
      +B_\ell\sqrt{2\Gamma_\beta\log(2/\delta)}}{\sqrt{N}} .
\]
Since \(\hat{\theta}\) minimizes the empirical risk and
\(\theta_\star\in\Theta\),
\[
\mathcal{R}(\hat{\theta})-\mathcal{R}(\theta_\star)
\le
2\Delta_N(\delta)
\le
\frac{C_{\rm ex}(\delta)}{\sqrt{N}},
\]
which proves~\eqref{eq:excess_risk_rate}.

It remains to connect the statistical risk to the coordinate-invariant
operator.  Because the noise is centered and independent of the inputs, and the
model is well specified, the noise variance cancels in the excess risk:
\[
\mathcal{R}(\theta)-\mathcal{R}(\theta_\star)
=
\mathbb{E}\left\|
\Delta D_\theta(c_t)u_t
+
\sum_{k=0}^{\infty}
\Delta H_{\theta,k}(c_t)u_{t-k-1}
\right\|_2^2 .
\]
Here \(\Delta D_\theta=D_\theta-D_{\theta_\star}\) and
\(\Delta H_{\theta,k}=H_{\theta,k}-H_{\theta_\star,k}\).  Applying persistent
excitation~\eqref{eq:persistent_excitation} with
\(G_{-1}=\Delta D_\theta\) and \(G_k=\Delta H_{\theta,k}\) yields
\[
\mathcal{R}(\theta)-\mathcal{R}(\theta_\star)
\ge
\kappa_{\rm pe}\,
\mathbb{E}_{c}\left[
\|\Delta D_\theta(c)\|_F^2
+\sum_{k=0}^{\infty}\|\Delta H_{\theta,k}(c)\|_F^2
\right].
\]
Taking \(\theta=\hat{\theta}\) and using
\eqref{eq:excess_risk_rate} proves~\eqref{eq:markov_operator_rate}.
\end{proof}

Combining Lemma~\ref{lem:markov_equivalence_class} with
Proposition~\ref{prop:appendix_identifiability_convergence} proves
Proposition~\ref{prop:identifiability}.  The internal matrices are identifiable
only modulo the structural equivalence relation \(\sim\), while the observable
Markov operator \(\mathcal{H}\) is identifiable.  Under squared loss, both the
excess prediction risk and the squared extended Markov-operator error are
bounded by \(C/\sqrt{N}\) with high probability, where \(C\) is an explicit
function of the AQS certificate \((P,\varepsilon)\), representation bounds
\((M_A,M_B,M_C,M_D)\), excitation and tail constants
\((K_u,K_\xi,R_u,R_\xi,\kappa_{\rm pe})\), the parameter dimension \(d\), the
\(\tanh\)-parameterization Lipschitz constants
\((L_A,L_B,L_C,L_D)\), the mixing constants \((b_\beta,\rho_\beta)\), and the
confidence level \(\delta\), but not of \(N\).  The unsquared Frobenius norm is
controlled by taking a square root of~\eqref{eq:markov_operator_rate}; the rate
stated in Proposition~\ref{prop:identifiability} is the squared-loss
estimation rate.

\section{Experiment Details}
\label{app:experiment_details}

\subsection{Training Details} \label{app:training_details}
Training adapts the frozen Hydra-BERT backbone to the masked diffusion objective on an internal Hydra corpus constructed from the source mixture in Table~\ref{tab:hydra_data_mixture}. The reported train and validation portions contain 4{,}295{,}110{,}656 tokens in total: 3{,}537{,}149{,}952 training tokens and 757{,}960{,}704 validation tokens. The validation split is used for monitoring and reporting the held-out losses in the experiments, while all model updates use only the training split.

\begin{table}[H]
\centering
\small
\caption{Source mixture used to construct the Hydra train/validation corpus. Percentages are the relative shard-pool weights of the reported training runs and normalize at sampling.}
\label{tab:hydra_data_mixture}
\begin{tabular}{lr}
\toprule
Source & Ratio (\%) \\
\midrule
common-pile StackExchange & 10.00 \\
common-pile LibreTexts & 1.67 \\
common-pile YouTube & 22.50 \\
common-pile PubMed & 7.50 \\
common-pile CCCC & 10.00 \\
common-pile Project Gutenberg & 5.00 \\
common-pile arXiv papers & 7.50 \\
common-pile News & 1.67 \\
common-pile DOAB & 15.00 \\
common-pile Pressbooks & 1.67 \\
WikiText-103 raw & 2.50 \\
common-pile Data Provenance Initiative & 8.33 \\
common-pile Wikimedia & 8.33 \\
common-pile arXiv abstracts & 3.33 \\
\bottomrule
\end{tabular}
\end{table}

For all reported experiments, only the \marksystem{} adapter and timestep-embedding parameter groups are trainable. The Hydra backbone parameters and masked-LM classification head remain fixed, isolating the effect of dynamic operator modulation. The objective is the masked-token diffusion negative log-likelihood with CART weighting enabled using parameter $p=0.45$. Training uses Distributed Data Parallel (DDP) on two NVIDIA GH200 GPUs with bfloat16 precision, gradient checkpointing, high-precision matrix multiplication, and gradient clipping at norm $1.0$.

\begin{table}[H]
\centering
\small
\caption{Training settings shared by all three \marksystem{} kernels. Values are taken from the training configurations and the corresponding optimization implementation.}
\label{tab:common_training}
\begin{tabular}{p{0.36\linewidth}p{0.54\linewidth}}
\toprule
Setting & Value \\
\midrule
Trainable parameters & \marksystem{} adapters and timestep embeddings only \\
Backbone & Frozen 23-layer Hydra-BERT SSM \\
Sequence length & 4096 tokens \\
Epochs & 3 \\
Optimizer & Fused AdamW, $\beta=(0.9,0.999)$, weight decay $0.01$ \\
MaRK learning rate & $6.0\times 10^{-4}$ \\
Learning-rate schedule & Step-wise linear warmup, then cosine decay \\
Minimum LR ratio & 0.1 \\
Precision & bfloat16 \\
Training strategy & Distributed Data Parallel \\
Workers & 8 \\
Matrix multiply precision & high \\
Hydra hidden size / vocabulary & 768 / 30{,}522 \\
SSM state / convolution / head dim & 64 / 7 / 64 \\
Expansion / chunk size & 2 / 256 \\
\bottomrule
\end{tabular}
\end{table}

The variants are initialized from their corresponding kernel-specific checkpoints. Apart from the kernel geometry, all reported variants use the same batch size, warmup length, total training horizon, optimizer, and base learning rate. The $6.0\times10^{-4}$ learning rate is linearly warmed up for 720 steps and then cosine-decayed over the remaining steps of the 14{,}400-step schedule to a minimum value of $0.1\times$ the base rate. The kernel-specific entries in Table~\ref{tab:kernel_training} specify only the adapter geometry: Chebyshev uses a degree-5 polynomial basis, while DCT uses a 256-point basis grid with 8 retained frequencies.

\begin{table}[H]
\centering
\small
\caption{Kernel-specific training settings. Shared kernel dimensions are rank $r=2$, adapter MLP dimension 256, and timestep embedding dimension 128.}
\label{tab:kernel_training}
\begin{tabular}{llll}
\toprule
Kernel & Batch size & Warmup / total steps & Kernel-specific parameters \\
\midrule
Hypernet & 112 & 720 / 14{,}400 & -- \\
Chebyshev & 112 & 720 / 14{,}400 & Polynomial degree $=5$ \\
DCT & 112 & 720 / 14{,}400 & Timepoints $=256$, frequencies $=8$ \\
\bottomrule
\end{tabular}
\end{table}

\subsection{Modulation Bounds and CART Settings}\label{app:hyperparams}
The bounded shifts and scales of Section~\ref{subsec:modulation} use the magnitudes in Table~\ref{tab:modulation_hyper}. Diagonal parameters use $\beta_\Theta \tanh(\cdot)$, with the $A$-shift bound capped at $\beta_A \le 0.5$ and the remaining magnitudes learned; dense parameters use $\alpha_\Phi$ and $\beta_\Phi$ on the low-rank factors. Every magnitude is initialized to $0.2$. The kernel dimensions (low-rank size, adapter MLP, timestep embedding, Chebyshev degree, and DCT frequency budget) are listed in Table~\ref{tab:kernel_training}.

\begin{table}[H]
\centering
\small
\caption{Modulation bounds and CART settings, identical across the three kernels.}
\label{tab:modulation_hyper}
\begin{tabular}{ll}
\toprule
Parameter & Value \\
\midrule
$A$-shift bound $\beta_A$ & $\le 0.5$ \\
Scale and shift initial magnitude $\alpha,\beta$ & $0.2$ (learned) \\
CART $p$ / scale & 0.45 / 1.0 \\
\bottomrule
\end{tabular}
\end{table}

The $A$-shift bound is the only constraint that enters the stability certificate, so it is capped while the remaining magnitudes are learned freely; $B$, $C$, and $D$ never enter the Lyapunov condition because they are not raised to powers. Because $A<0$ and $\Delta>0$, every discrete eigenvalue stays in $(0,1)$ under any finite bound, so the modulated recurrence is Schur-stable for each individual parameter value and the Markov sequence always decays. The finite bound buys a uniform positive margin rather than stability itself: the worst case sits at the vertex $\delta=-\beta_A$, and as $\beta_A$ grows, that eigenvalue approaches $1$ from below while the margin $\varepsilon = 1-\max_i \bar{A}_i^2$ shrinks toward zero. An unbounded shift would therefore not destabilize any single realization, but it would remove the uniform guarantee that Appendix~\ref{app:proofs} relies on, which is why the bound is kept finite. The bounds themselves were set conservatively to keep the modulated operators near the frozen backbone, and the CART sharpness $p=0.45$ follows Dream.

\subsection{Evaluation Protocol and Statistics}\label{app:eval_protocol}
Every language-model number in this paper is a CART-weighted validation loss, read as \texttt{weighted\_nll} from the evaluation harness; perplexity is only the exponential of that quantity and is not the reported metric. Each cell is a mean over $10$ seeds for the accuracy tables (Tables~\ref{tab:ablation} and~\ref{tab:loo}) and over $15$ seeds for the inference table (Table~\ref{tab:latency}), with seed $i$ fixed at $42 + 100i$. Intervals are standard-error-of-the-mean bands over seeds (sample standard deviation with $\mathrm{ddof}=1$ divided by $\sqrt{n}$): Tables~\ref{tab:ablation} and~\ref{tab:loo} report the normal approximation (mean $\pm\,1.96\times$SEM over $10$ seeds), while the inference table (Table~\ref{tab:latency}) reports the Student-$t$ interval with $14$ degrees of freedom over $15$ seeds. The synthetic LPV curves in Figure~\ref{fig:moe_identifiability} use the same seed and interval convention.

Validation consumes the global random-number stream on every batch, sampling both the diffusion timestep and the masking pattern, and building the model also draws from it because Hydra initializes its discretization bias randomly. The harness therefore captures the RNG state immediately after the model is built and restores it before each dataset, so the CART suite reproduces the leave-one-out draws and the WikiText loss matches the leave-one-out full row to the bfloat16 run-to-run noise. Because the largest validation split (arXiv) contains tens of thousands of packed sequences at length 4096, its evaluation is capped at 500 validation batches, while smaller datasets are evaluated in full.

\subsection{AQS Certificate Evaluation}
We evaluate the AQS certificate by directly checking discrete-time stability margins implied by the ZOH discretization for the learned diagonal recurrence. Using the identity Lyapunov function $P=I$, this yields a closed-form certificate that avoids SDP solving. We load the pretrained diagonal log-parameters $A_{\log}^{(\ell)}$ from the 23-layer Hydra checkpoint and evaluate the worst-case (least stable) parameter vertex by monotonicity of the map $A_{\log}\mapsto \bar{A}=\exp(-\exp(A_{\log})\Delta)$ under the bounded shift $A_{\log}\pm\beta_A$, using $\Delta_{\min}=0.001$:

\begin{algorithm}[H]
\caption{Constructive AQS Certificate via $P = I$ Lyapunov Function}
\label{alg:aqs_certificate}
\begin{algorithmic}[1]
\State \textbf{Input:} Pretrained $A_{\log}^{(\ell)}$ for layers $\ell = 1, \ldots, 23$; bound $\beta_A = 0.5$; $\Delta_{\min} = 0.001$
\State Set $P \gets I$ \Comment{Identity matrix (valid Lyapunov function for diagonal $A$ with $\rho < 1$)}
\For{each layer $\ell$, each head $i$}
    \State $A_{\log,i}^{\text{worst}} \gets A_{\log,i}^{(\ell)} - \beta_A$ \Comment{Worst-case vertex (least decay)}
    \State $A_i \gets -\exp(A_{\log,i}^{\text{worst}})$ \Comment{Strictly negative}
    \State $\bar{a}_i \gets \exp(A_{i} \cdot \Delta_{\min})$ \Comment{Discrete eigenvalue $\in (0, 1)$}
\EndFor
\State $\rho_{\max}^{(\ell)} \gets \max_i |\bar{a}_i|$ \Comment{Spectral radius at worst-case vertex}
\State $\varepsilon_\ell \gets 1 - (\rho_{\max}^{(\ell)})^2$ \Comment{Lyapunov stability margin}
\State \textbf{Verify:} $\varepsilon_\ell > 0$ for all $\ell$ \Comment{$\bar{A}^\top I\, \bar{A} - I \prec 0$ (neg.\ definite)}
\State \textbf{Return} $P = I$, $\{\varepsilon_\ell\}_{\ell=1}^{23}$ \Comment{AQS Certificate with per-layer margins}
\end{algorithmic}
\end{algorithm}

The procedure is conservative in three ways: it fixes the Lyapunov function to $P=I$, evaluates the least stable vertex $\delta=-\beta_A$, and uses the smallest discretization $\Delta_{\min}=0.001$, which places the discrete eigenvalue closest to one. Because $\beta_A$ is shared across the bounded kernels, the certificate depends only on the pretrained $A_{\log}$ and not on the adapter geometry, so a single pass over the backbone covers all three variants. The reported margins are the distance of the worst-case eigenvalue from the unit circle.

\subsection{Synthetic LPV Identifiability Algorithms}
For identifiability tests, both ground-truth generator models and target estimators use diagonal LPV-SSMs with state dimension $d_{\text{state}} = 4$, factorization rank $r = 2$, sequence length $L = 20$, and datasets $N \in \{50,100,200,400,800,1600\}$ over 15 random seeds. Inputs are sampled as $u \sim \mathcal{N}(0,1)$ and diffusion coordinates as $\tau \sim \mathcal{U}(0,1)$; scalar outputs are generated by the ground-truth system and corrupted with Gaussian noise $\sigma=0.1$. Chebyshev and DCT use a fixed readout $C=\mathbf{1}$ to remove final rotation/scaling ambiguity, while the Hypernet synthetic kernel also modulates $C(\tau)$ with bounded low-rank scale/shift terms.
This choice is without loss of generality for scalar readouts: any nonzero scalar $C$ can be absorbed by a linear reparameterization of the latent state, so fixing $C=\mathbf{1}$ isolates operator recovery rather than an arbitrary output scaling.

The continuous matrices returned by Algorithms~\ref{alg:hypernet_ssm}--\ref{alg:dct_ssm} are rolled out with the same scalar ZOH discretization used in the scripts:
\[
\bar{A}(\tau)=\exp(A(\tau)\Delta t),\qquad
\bar{B}(\tau)=\frac{\bar{A}(\tau)-1}{A(\tau)+10^{-7}}\odot B(\tau),
\qquad \Delta t=0.1 .
\]
The synthetic sequence is then generated by
\[
x_s=\bar{A}(\tau_s)\odot x_{s-1}+\bar{B}(\tau_s)u_s,\qquad
y_s=C(\tau_s)^\top x_s .
\]
Estimators are initialized from noisy copies of the ground-truth parameters and trained with Mean-Squared Error (MSE). Recovery is measured by frozen-time Markov operator error on a 200-point $\tau$ grid over lags $k=0,\ldots,14$, using $M_k(\tau)=C(\tau)^\top(\bar{A^k}(\tau)\odot \bar{B}(\tau))$; for Chebyshev and DCT, $C(\tau)=\mathbf{1}$ is constant.

The Hypernet generator is the most flexible synthetic parameterization: a sinusoidal embedding of $\tau$ is mapped by a four-layer SiLU MLP to bounded shifts of the recurrence, input map, and readout. The negative exponential form enforces $A(\tau)<0$ componentwise, and sigmoid/tanh bounds keep the modulation amplitude controlled. Unlike the basis-kernel variants below, Hypernet returns a time-varying $C(\tau)$ in addition to $A(\tau)$ and $B(\tau)$.

\begin{algorithm}[H]
\caption{Hypernet Synthetic LPV-SSM}
\label{alg:hypernet_ssm}
\begin{algorithmic}[1]
\State \textbf{Input:} Diffusion coordinate $\tau \in [0,1]$
\State \textbf{Parameters:} $A_{\log\_\mathrm{base}}$, low-rank base $U_{\mathrm{base}},V_{\mathrm{base}}$, scalars $\beta_A^{\mathrm{param}},\alpha_B,\beta_B,\alpha_C,\beta_C$, four-layer SiLU MLP, linear heads
\State $\mathrm{cond}(\tau) \gets [\sin(2\pi m\tau),\cos(2\pi m\tau)]_{m=0}^{63}$ \Comment{128-dimensional conditioning signal}
\State $h \gets \mathrm{MLP}_{4\times\mathrm{SiLU}}(\mathrm{cond}(\tau))$
\State $\mathrm{uvflat}(q) \gets \mathrm{vec}(U_qV_q)_{1:d_{\mathrm{state}}}$, where $q$ is split into rank-$r$ factors $(U_q,V_q)$ \Comment{matches the low-rank head flattening}
\State $\beta_A^{\mathrm{eff}} \gets 0.5\,\sigma(\beta_A^{\mathrm{param}})$, \quad $a_{\mathrm{shift}} \gets W_Ah$ \Comment{bounds the recurrence modulation}
\State $A(\tau) \gets -\exp(A_{\log\_\mathrm{base}}+\beta_A^{\mathrm{eff}}\tanh(a_{\mathrm{shift}}))$
\State $b_{\mathrm{base}} \gets \mathrm{vec}(U_{\mathrm{base}}V_{\mathrm{base}})_{1:d_{\mathrm{state}}}$
\State $b_{\mathrm{scale}}(\tau) \gets 1+\alpha_B\tanh(\mathrm{uvflat}(W_{B,\mathrm{scale}}h))$, \quad $b_{\mathrm{shift}}(\tau) \gets \beta_B\tanh(\mathrm{uvflat}(W_{B,\mathrm{shift}}h))$ \Comment{bounded scale/shift for $B$}
\State $B(\tau) \gets b_{\mathrm{base}}\odot b_{\mathrm{scale}}(\tau)+b_{\mathrm{shift}}(\tau)$
\State $c_{\mathrm{scale}}(\tau) \gets 1+\alpha_C\tanh(\mathrm{uvflat}(W_{C,\mathrm{scale}}h))$, \quad $c_{\mathrm{shift}}(\tau) \gets \beta_C\tanh(\mathrm{uvflat}(W_{C,\mathrm{shift}}h))$
\State $C(\tau) \gets \mathbf{1}\odot c_{\mathrm{scale}}(\tau)+c_{\mathrm{shift}}(\tau)$ \Comment{learned readout for the Hypernet variant}
\State \textbf{Return} $A(\tau), B(\tau), C(\tau)$
\end{algorithmic}
\end{algorithm}

Hypernet is the only synthetic generator with a learned, time-varying readout, so its recovery error also reflects the readout path and not just the recurrence and input maps. It is therefore the hardest of the three synthetic targets, and the basis kernels are compared against it rather than against an easier diagonal reference.

The Chebyshev generator uses a fixed polynomial basis after mapping $\tau$ to $z\in[-1,1]$. All time variation enters through bounded Chebyshev shifts of the log-recurrence and of the low-rank factors defining $B(\tau)$. The readout is fixed to $C=\mathbf{1}$, so the reported recovery error measures the induced Markov operator rather than an arbitrary rescaling of the output map.

\begin{algorithm}[H]
\caption{Chebyshev Synthetic LPV-SSM}
\label{alg:chebyshev_ssm}
\begin{algorithmic}[1]
\State \textbf{Input:} Diffusion coordinate $\tau \in [0,1]$
\State \textbf{Parameters:} $A_{\log\_\mathrm{base}}$, $U_{\mathrm{base}}$, $V_{\mathrm{base}}$, coefficients $A_{\mathrm{coeffs}},U_{\mathrm{coeffs}},V_{\mathrm{coeffs}}$, bounds $\beta_A,\beta_B$, basis count $K_{\mathrm{basis}}=3$
\State $z \gets 2\tau - 1$ \Comment{Chebyshev domain}
\State $T_0(z) \gets 1$, $T_1(z) \gets z$
\For{$k = 2$ \textbf{to} $K_{\mathrm{basis}}-1$}
    \State $T_k(z) \gets 2z \cdot T_{k-1}(z) - T_{k-2}(z)$
\EndFor
\State $A_{\mathrm{shift}} \gets \sum_{k=0}^{K_{\mathrm{basis}}-1} T_k(z) A_{\mathrm{coeffs}}^{(k)}$ \Comment{basis modulation of the log recurrence}
\State $U_{\mathrm{shift}} \gets \sum_{k=0}^{K_{\mathrm{basis}}-1} T_k(z) U_{\mathrm{coeffs}}^{(k)}$, \quad $V_{\mathrm{shift}} \gets \sum_{k=0}^{K_{\mathrm{basis}}-1} T_k(z) V_{\mathrm{coeffs}}^{(k)}$
\State $A(\tau) \gets -\exp(A_{\log\_\mathrm{base}}+\beta_A\tanh(A_{\mathrm{shift}}))$ \Comment{negative diagonal continuous dynamics}
\State $U(\tau) \gets U_{\mathrm{base}}+\beta_B\tanh(U_{\mathrm{shift}})$, \quad $V(\tau) \gets V_{\mathrm{base}}+\beta_B\tanh(V_{\mathrm{shift}})$
\State $B(\tau) \gets U(\tau)V(\tau)$, \quad $C \gets \mathbf{1}$ \Comment{fixed readout removes scaling ambiguity}
\State \textbf{Return} $A(\tau), B(\tau), C$
\end{algorithmic}
\end{algorithm}

The synthetic Chebyshev generator uses a degree-3 basis to keep the target low-dimensional, whereas the language-model Chebyshev adapter uses degree 5 (Table~\ref{tab:kernel_training}). The synthetic target is therefore a smoother, lower-capacity member of the same polynomial family.

The DCT generator replaces the Chebyshev basis with cosine modes and a learned monotone spectral decay. The normalization constants match the $L_{\mathrm{time}}=256$ DCT-II basis used in the experiment, while tanh-bounded shifts keep the recurrence and input map controlled. As in the Chebyshev setting, $C=\mathbf{1}$ is fixed so the comparison focuses on the recovered operator.

\begin{algorithm}[H]
\caption{Discrete Cosine Transform (DCT) Synthetic LPV-SSM}
\label{alg:dct_ssm}
\begin{algorithmic}[1]
\State \textbf{Input:} Diffusion coordinate $\tau \in [0,1]$
\State \textbf{Parameters:} $A_{\log\_\mathrm{base}}$, $U_{\mathrm{base}}$, $V_{\mathrm{base}}$, spectral coefficients $A_{\mathrm{spec}},U_{\mathrm{spec}},V_{\mathrm{spec}}$, bounds $\beta_A,\beta_B$, bandwidth $\alpha$, $n_{\mathrm{freqs}}=8$, $L_{\mathrm{time}}=256$
\For{$k = 0$ \textbf{to} $n_{\mathrm{freqs}}-1$}
    \State $w_k \gets (k+1)^{-(\mathrm{softplus}(\alpha)+10^{-4})}$ \Comment{learned spectral attenuation}
    \State $c_k \gets 1/\sqrt{L_{\mathrm{time}}}$ if $k=0$, else $\sqrt{2/L_{\mathrm{time}}}$ \Comment{DCT-II normalization}
    \State $b_k(\tau) \gets c_k \cos(\pi k\tau)$
\EndFor
\State $A_{\mathrm{shift}} \gets \sum_{k=0}^{n_{\mathrm{freqs}}-1} b_k(\tau)\,w_k\,A_{\mathrm{spec}}^{(k)}$ \Comment{attenuated spectral recurrence shift}
\State $U_{\mathrm{shift}} \gets \sum_{k=0}^{n_{\mathrm{freqs}}-1} b_k(\tau)\,w_k\,U_{\mathrm{spec}}^{(k)}$, \quad $V_{\mathrm{shift}} \gets \sum_{k=0}^{n_{\mathrm{freqs}}-1} b_k(\tau)\,w_k\,V_{\mathrm{spec}}^{(k)}$
\State $A(\tau) \gets -\exp(A_{\log\_\mathrm{base}}+\beta_A\tanh(A_{\mathrm{shift}}))$ \Comment{negative diagonal continuous dynamics}
\State $U(\tau) \gets U_{\mathrm{base}}+\beta_B\tanh(U_{\mathrm{shift}})$, \quad $V(\tau) \gets V_{\mathrm{base}}+\beta_B\tanh(V_{\mathrm{shift}})$
\State $B(\tau) \gets U(\tau)V(\tau)$, \quad $C \gets \mathbf{1}$ \Comment{fixed readout isolates operator recovery}
\State \textbf{Return} $A(\tau), B(\tau), C$
\end{algorithmic}
\end{algorithm}

The synthetic DCT generator keeps the basis length and frequency count of the language-model adapter so that the target uses the same bandlimited parameterization it is meant to probe. The learned attenuation only reshapes how quickly the high-frequency coefficients decay, so a miss in recovery is attributable to the bandlimited basis rather than to a mismatch in basis definition.

\end{document}